\documentclass[]{openmoss}
\usepackage{helvet}

\usepackage[toc,page,header]{appendix}
\usepackage{titletoc}
\usepackage{minitoc}

\usepackage{algorithm}
\usepackage{algpseudocode}

\usepackage{wasysym}
\usepackage{fontawesome}

\usepackage{enumitem}

\usepackage{listings}

\usepackage{pgfplots}
\pgfplotsset{compat=1.18}
\usetikzlibrary{arrows.meta, positioning, calc, fit, backgrounds, shapes.geometric, decorations.pathmorphing}

\usepackage{arydshln}
\usepackage{adjustbox}
\usepackage{colortbl}
\usepackage{multirow}

\usepackage{xspace}
\usepackage{nicefrac}
\usepackage{float}
\usepackage{comment}
\usepackage{wrapfig}
\usepackage{pifont}
\usepackage{dsfont}

\usepackage{amsmath, amssymb}
\usepackage{amsthm}
\newtheorem{definition}{Definition}
\newtheorem{proposition}{Proposition}

\newcommand{\bench}{\textsc{SciJudgeBench}\xspace}
\newcommand{\model}{\textsc{Scientific Judge}\xspace}
\newcommand{\judge}{\textsc{Scientific Judge}\xspace}
\newcommand{\judgetable}{\textsc{SciJudge}\xspace}

\newcommand{\thinker}{\textsc{Scientific Thinker}\xspace}
\newcommand{\thinkertable}{\textsc{SciThinker}\xspace}
\newcommand{\gain}[1]{{\scriptsize\color{green!70!black}(+#1)}}
\newcommand{\loss}[1]{{\scriptsize\color{red!70!black}(#1)}}

\usepackage{tabularx}
\usepackage{xcolor}
\definecolor{case_red}{RGB}{190,0,0}
\definecolor{case_blue}{RGB}{0,102,204}
\definecolor{case_green}{RGB}{0,153,0}

\title{AI Can Learn Scientific Taste}

\author{
    Jingqi Tong\textsuperscript{1,2,3,*},
    Mingzhe Li\textsuperscript{1,2,3,*},
    Hangcheng Li\textsuperscript{1,2,3,*},
    Yongzhuo Yang\textsuperscript{1,3,$\ddagger$},
    Yurong Mou\textsuperscript{1,2,3,$\ddagger$}, \\
    Weijie Ma\textsuperscript{1,2},
    Hongji Chen\textsuperscript{1,3},
    Xiaoran Liu\textsuperscript{1,2,3},
    Qinyuan Cheng\textsuperscript{1,2,3},
    Ming Zhang\textsuperscript{1},
    Qiguang Chen\textsuperscript{5}, \\
    Weifeng Ge\textsuperscript{1},
    Qipeng Guo\textsuperscript{2},
    Tianlei Ying\textsuperscript{1,2},
    Tianxiang Sun\textsuperscript{2},
    Yining Zheng\textsuperscript{1,2,3},
    Zhiheng Xi\textsuperscript{1}, \\
    Xinchi Chen\textsuperscript{1,3,$\dagger$},
    Jun Zhao\textsuperscript{1,$\dagger$},
    Ning Ding\textsuperscript{4},
    Xuanjing Huang\textsuperscript{1},
    Yu-Gang Jiang\textsuperscript{1},
    Xipeng Qiu\textsuperscript{1,2,3,$\dagger$}
}
\affiliation[1]{\mbox{Fudan University}}
\affiliation[2]{\mbox{Shanghai Innovation Institute}}
\affiliation[3]{\mbox{OpenMOSS Team}}
\affiliation[4]{\mbox{Tsinghua University}}
\affiliation[5]{\mbox{Central South University}}

\contribution{* Equal contribution. $\dagger$ Corresponding author. $\ddagger$ Core contributors}

\abstract{
Scientific discovery depends on expert judgement and foresight, which we call scientific taste: the ability to judge and propose research ideas with the potential for long-term scientific impact.
Scientific taste is largely concentrated among highly experienced researchers, whose expertise is usually limited to a few specialised fields. If AI could learn scientific taste, it could reduce reliance on human experts and accelerate scientific discovery.
Whether AI can learn this ability remains an open question. We introduce Reinforcement Learning from Community Feedback (RLCF) to learn judgement and ideation. \judge learns from community feedback, such as citations. \thinker learns to propose research ideas with high potential impact.
Experiments show that \judge outperforms strong LLM baselines and that learned judgement generalises to future-year papers, other community metrics, and unseen fields. Furthermore, \thinker proposes research ideas with higher potential impact than those proposed by baselines. These results suggest that AI can learn scientific taste, marking an important step towards AI systems that could help accelerate scientific discovery.
}

\contact{\email{jqtong25@m.fudan.edu.cn}, \email{\{xc\_chen,zhaoj19,xpqiu\}@fudan.edu.cn}}
\coderepository{\url{https://github.com/tongjingqi/AI-Can-Learn-Scientific-Taste}}

\begin{document}
\maketitle
\vspace{-6mm}

\begin{figure}[!htbp]
  \centering
  \includegraphics[width=\textwidth]{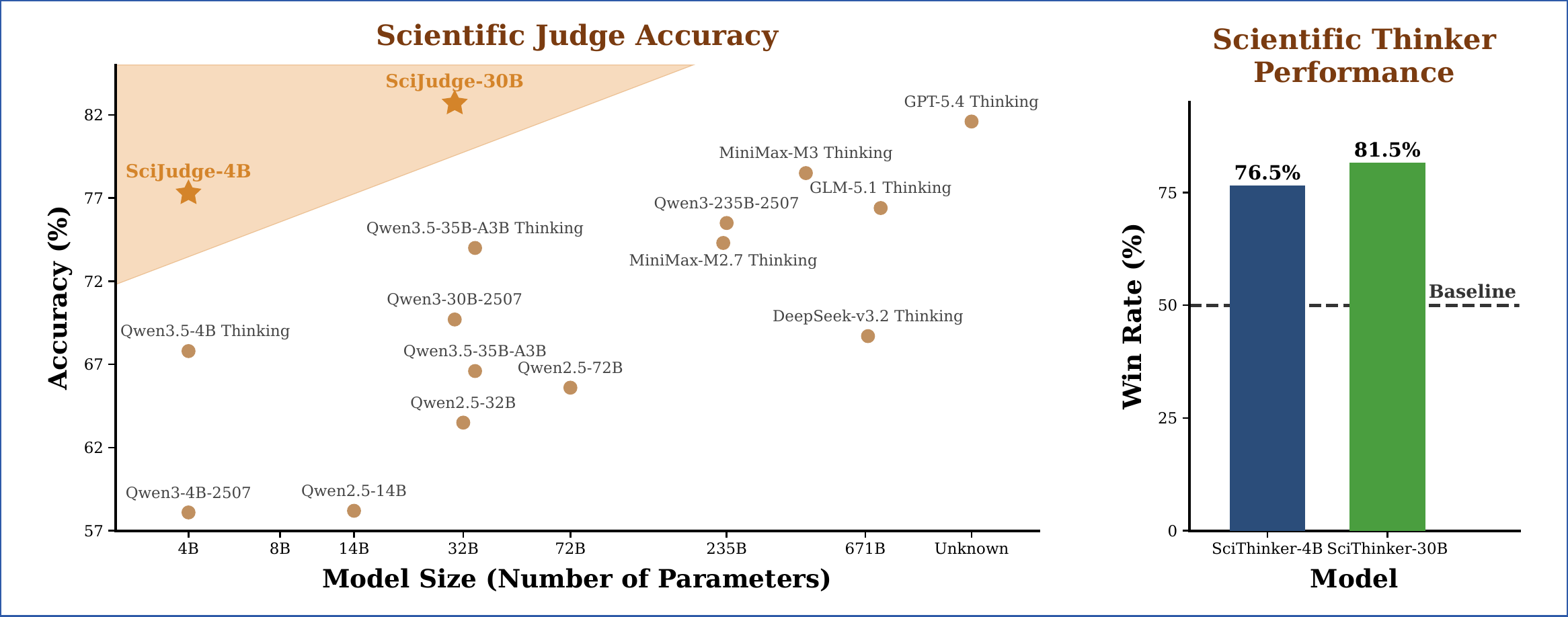}
  \caption{\textbf{(Left)} \model accuracy on \bench, comparing trained models (stars) with strong LLMs. \textbf{(Right)} In-domain \thinker win rates over base policies under ensemble evaluation.}
  \label{fig:performance_teaser}
\end{figure}

% \newpage
\section{Introduction}
\label{sec:intro}

Scientific discovery depends on expert judgement and foresight. When reading a new result, an experienced researcher can judge whether it represents a genuine advance and identify promising follow-up directions---often long before the broader community has recognised the work's significance. This ability to judge and propose research ideas with the potential for long-term scientific impact is what we call \textit{scientific taste}~\citep{tjian2023terri,mitchison2013question}.

Scientific taste is not simply a matter of subjective preference. Hume argued that a standard of taste can emerge from the joint verdict of qualified judges rather than from arbitrary individual preferences~\citep{hume1757standard}. Kant~\citep{kant1790critique} introduced taste as a kind of ``sensus communis'', a shared sense that considers how others might judge rather than merely reflecting personal preference.

In science, such a community verdict is reflected in long-term interactions among researchers. Work that aligns with this scientific taste is more likely to be reused and extended by subsequent studies. Ultimately, this community feedback is expressed through signals such as citations, one of the most common measures of scientific impact~\citep{wang2013quantifying,fortunato2018science}.

In practice, such taste is scarce. It depends on knowledge, skill, and sustained research experience~\citep{ericsson1993role,brownstein2019expert}. Even experienced researchers tend to specialise in a limited number of areas as the burden of knowledge grows~\citep{jones2009burden}. If AI could learn this taste, it could reduce reliance on a small group of human experts and speed up discovery.

However, whether AI can learn this broader ability remains an open question~\citep{wang2024scientific,si2025ideationexecutiongapexecutionoutcomes}. We propose Reinforcement Learning from Community Feedback (RLCF) to learn scientific judgement and ideation. \judge learns from community feedback, such as citations: it compares two papers using its own evaluation rubric and reasoning before choosing the better one. \thinker learns to propose scientific ideas that are guided by this judgement, have high academic value and potential impact, and align with community preferences (Figure~\ref{fig:main}).

\judge outperforms strong LLM baselines on \bench, while learned judgement generalises to future-year papers, other community metrics, and unseen fields, suggesting that learned judgement is transferable. Moreover, experiments show that \thinker proposes ideas judged to have higher potential impact than those proposed by baselines. Together, our results suggest that scientific taste can be learned from community feedback, marking a step towards AI systems that could help accelerate scientific discovery.

% This paper makes the following contributions:

% \begin{itemize}
% \item We formulate scientific taste learning as a preference modeling and alignment problem, proposing the Reinforcement Learning from Community Feedback (RLCF) training paradigm which leverages large-scale community signals (e.g., citations) as supervision.
% \item We construct \bench for training and evaluating AI's scientific judgement, which consists of 720K field- and time-matched citation-based paper abstract pairs.
% \item We train \judge for scientific judgement, with the 30B variant outperforming the evaluated strong LLM baselines in average accuracy on the main test set. Learned judgement generalises to future-year papers, other community metrics, and unseen fields, and its gains persist under author/institution and topic controls. We further train \thinker for ideation, which proposes ideas with higher potential impact after training.
% \item Our findings suggest that AI models can learn scientific taste from large-scale community feedback.
% \end{itemize}

\begin{table}[!t]
    \small
    \vspace{0.5em}
    \noindent
    \begin{tabularx}{\textwidth}{@{}X@{}}
    \toprule
    \textbf{Prompt:} You are an AI researcher. Given the following paper title and abstract, propose a follow-up idea with high potential impact. \par
    \vspace{0.2em}
    Title: The Invisible Leash: Why RLVR May or May Not Escape Its Origin~\citep{wu2025invisible} \par
    Abstract: ... \par
    \vspace{0.3em} \\
    \midrule
    <think> \par
    ...The original paper discusses... \par
    So my idea: ...during RLVR training, we'd \textcolor{case_green}{\textbf{add a bonus for trajectories that deviate from the base model's high-probability paths but still have high potential}} (measured by... maybe a confidence threshold?)...
    
    Testing academic value:... Title brainstorming:... Abstract structure must include... Okay, time to write it cleanly. \par
    </think> \par
    Title: Beyond the Leash: Uncertainty-Guided Exploration for Verifiable Reward Learning \par
    Abstract: ...We propose..., a framework...\par
    \vspace{0.2em} \\
    \bottomrule
    \end{tabularx}
    \caption{Task of \thinker, which proposes a follow-up research idea given a paper (title and abstract). Through step-by-step thinking, it extracts the problem about RLVR pointed out by the given paper, proposes a specific algorithmic solution, and finally gives a title and abstract. Full prompt is in Appendix~\ref{sec:app_thinker_prompt}. Full research idea and comparison with the model before training are presented in Appendix~\ref{sec:app_thinker_examples}.}
\end{table}

\section{Background and Related Work}
\label{sec:related}

\subsection{Definition of Scientific Taste}

Scientific discovery depends on expert judgement and foresight, which we call scientific taste: the ability to \textbf{judge} and \textbf{propose} research ideas with potential for long-term scientific impact. To make this notion precise, we provide a layered formal definition.

\paragraph{Potential Impact.}
We first formalize what it means for a research idea to have potential impact. Citations are the most common way to measure the impact of scientific research \citep{fortunato2018science,wang2013quantifying}. Consider a published paper $p$. Let $c_t(p)$ be the number of new citations that paper $p$ receives in year $t$ after publication. We model $c_t(p)$ as a non-negative random variable drawn from a distribution that depends on the paper and its temporal context. The cumulative expected impact of paper $p$ is defined as:
\begin{equation}
I(p) = \lim_{N \to \infty} \sum_{t=1}^{N} \mathbb{E}[c_t(p)],
\end{equation}
where $\mathbb{E}[c_t(p)]$ denotes the expected citation increment in year $t$. A paper with a larger $I(p)$ is considered to have higher potential impact. 

%Notably, relative comparison remains well-defined even when both series diverge

\paragraph{Judgement Capability.} The judgement capability of a model $\theta$ is measured by the expected accuracy of comparing the cumulative expected impact of paper pairs. Let $\mathcal{D}$ denote a distribution over field- and time-matched paper pairs. For a single pair $(p_a, p_b)$, the ground-truth label is:
\begin{equation}
y(p_a, p_b) = 
\begin{cases}
1, & \text{if } I(p_a) > I(p_b), \\
0, & \text{otherwise}.
\end{cases}
\end{equation}

Note that this label is well-defined even when both $I(p_a)$ and $I(p_b)$ diverge (see Appendix~\ref{app:divergent_comparison} for a formal proof). In practice,  we work with finite-horizon approximations $I_N(p) = \sum_{t=1}^{N} \mathbb{E}[c_t(p)]$.
%In practice, since $I(p)$ may diverge, we work with finite-horizon approximations $I_N(p) = \sum_{t=1}^{N} \mathbb{E}[c_t(p)]$ or use relative comparisons between papers rather than absolute values (see Appendix~\ref{app:divergent_comparison} for a formal proof).

The judgement capability is:
\begin{equation}
\textsc{JudgeCap}(\theta) = \mathbb{E}_{(p_a, p_b) \sim \mathcal{D}} \Big[ \mathds{1}\!\left[\textsc{Judge}_{\theta}(p_a, p_b) = y(p_a, p_b)\right] \Big],
\end{equation}
where $\textsc{Judge}_{\theta}(p_a, p_b)$ is the model's predicted result. A higher $\textsc{JudgeCap}(\theta)$ indicates stronger judgement capability.

\paragraph{Ideation Capability.} The ideation capability of a model $\phi$ is characterized by the expected impact of the ideas it proposes. Given a seed reference paper $s \in \mathcal{S}$, model $\phi$ generates a new research idea $\textsc{Thinker}_{\phi}(s)$. The ideation capability is:
\begin{equation}
\textsc{ThinkerCap}(\phi) = \mathbb{E}_{s \sim \mathcal{S}} \left[ I\!\left(\textsc{Thinker}_{\phi}(s)\right) \right].
\end{equation}
For two models $\phi_A$ and $\phi_B$, we say $\phi_A$ has stronger ideation capability than $\phi_B$ if $\textsc{ThinkerCap}(\phi_A) > \textsc{ThinkerCap}(\phi_B)$.

\paragraph{Scientific Taste.}
We refer to the combination of judgement capability and ideation capability as scientific taste. Formally, a model possesses strong scientific taste if it achieves both high $\textsc{JudgeCap}$ and high $\textsc{ThinkerCap}$.

% \paragraph{Scientific Taste.}
% We refer to the combination of judgement capability and ideation capability as scientific taste. Formally, the scientific taste of a model $\psi$ is defined as:
% \begin{equation}
% \textsc{Taste}(\theta,\phi) = \textsc{JudgeCap}(\theta) + \beta \cdot \sigma\!\left(\textsc{ThinkerCap}(\phi)\right),
% \end{equation}
% where $\sigma(\cdot)$ is a normalization function that maps the raw expected impact to $[0, 1]$, making it align with $\textsc{JudgeCap}(\psi) \in [0, 1]$, and $\beta > 0$ is a weighting coefficient that controls the relative importance of ideation capability versus judgement capability.

\begin{table}[!tbp]
    \small
    \vspace{0.5em}
    \noindent
    \begin{tabularx}{\textwidth}{@{}X@{}}
    \toprule
    \textbf{Prompt:} Based on the titles, abstracts, and publication dates of the following two papers A and B, determine which paper has a higher citation count. \par
    \vspace{0.2em}
    \textbf{Paper A:} Group Sequence Policy Optimization~\citep{xu2025gspo} (Published: 2025-07-24) \textbf{Abstract:} ... \par
    \vspace{0.2em}
    \textbf{Paper B:} Self-Foveate: Enhancing Diversity and Difficulty of Synthesized Instructions from Unsupervised Text via Multi-Level Foveation~\citep{li2025selffoveate} (Published: 2025-07-31) \textbf{Abstract:} ... \par
    \vspace{0.2em} \\
    \midrule
    \vspace{0.2em}
    \textbf{Reasoning:} To determine which paper has a higher citation count, we need to consider several factors... 1. Publication Date... 2. Topic and Relevance... 3. Impact and Practical Relevance... 4. Broader Research Dynamics... 5. Citation Momentum... Given that both papers are very recent... and assuming they are both well-received... \textcolor{case_green}{the more enduring nature of optimization algorithms in a rapidly evolving field where data synthesis pipelines change frequently} give Paper A a stronger potential for higher citation count. \par
    \vspace{0.5em}
    \textbf{Answer:} A \par
    \vspace{0.2em} \\
    \bottomrule
    \end{tabularx}
    \caption{Case of our 30B \judge (i.e., \judgetable-30B), which correctly predicts that Paper A will receive more citations by reasoning about topic generality, long-term impact, and citation dynamics. The full case is provided in Appendix~\ref{sec:app_examples}.}
\end{table}

\subsection{AI for Scientific Research}

Current training for AI Scientists is mainly targeting literature search~\citep{openai_deep_research, jin2025search,zheng2025deepresearcher,zhang2026deepresearchagentsretrieve} and experiment execution~\citep{chan2024mle, lu2024ai_scientist,survey2025ai_scientists,weng2025deepscientist,wu2025innovatorbench,schmidgall2025agentlaboratoryusingllm,yamada2025aiscientistv2workshoplevelautomated}. However, these capabilities address how to carry out research rather than what research directions are worth pursuing. Human evaluations show that while LLMs can generate novel research ideas, they often struggle to reliably distinguish potentially high-impact directions from ideas that are superficially novel but trivial~\citep{wang2024scientific}. This gap constitutes a key difference between today's AI Scientists and human experts, which we refer to as scientific taste, including (1) judging the scientific value of candidate ideas, and (2) proposing research questions, hypotheses, and methods with high potential impact.

Recent studies have explored leveraging LLMs to evaluate academic manuscripts, predict review scores, and generate feedback~\citep{jin2024agentreview,d2024marg,zhang2025aixiv,liang2024can,thakkar2025can,gottweis2025towards}. However, these works primarily employ language models as components in review pipelines, rather than enhancing the model's intrinsic capability for scientific judgment.
Prior works \citep{zhu2025deepreview,weng2024cycleresearcher} typically use supervised fine-tuning (SFT) to train models on reviewer feedback, whereas we use community feedback through reinforcement learning to train models to judge and propose ideas with high potential impact, aligning it more closely with broader community preferences.

Current ideation methods also exhibit clear limitations. In practice, ideation improvement is frequently driven by random heuristics or simple brainstorming strategies~\citep{wang2024scientific}. Recent work such as OpenNovelty uses information retrieval to measure how different an idea is from prior work (i.e., novelty)~\citep{zhang2026opennoveltyllmpoweredagenticverifiable}. Currently, optimization of ideation is primarily focused on external retrieval and model prompt stimulation~\citep{zhang2026opennoveltyllmpoweredagenticverifiable,wang2024scientific}, while enhancing the model's intrinsic ideation capabilities remains underexplored.

\subsection{RL Training Paradigms for LLMs}

Reinforcement learning can be used to improve alignment~\citep{ouyang2022training}. Reinforcement Learning from Human Feedback (RLHF)~\citep{ouyang2022training,stiennon2020learning,bai2022hh} collects human preference annotations, trains a reward model to capture human preferences, and then optimizes a policy model with that reward, enabling better alignment to subjective preferences such as being helpful and harmless. Recent efforts further scale reward modeling and develop standardized benchmarks for evaluating reward models~\citep{worldpm,rewardbench,rmb}. For tasks such as math and coding, Reinforcement Learning with Verifiable Reward (RLVR)~\citep{deepseek2025r1,shao2024deepseekmathpushinglimitsmathematical} instead leverages verifiable rewards provided by ground-truth answers, unit tests, or formal checkers, and has led to large gains in mathematical reasoning, code generation, and broader post-training pipelines~\citep{tulu3,game_rl,mathtrap}.

However, RLVR is inherently tied to tasks with verifiable ground-truth, making it difficult to apply to open-ended tasks such as scientific judging and idea generation~\citep{deepseek2025r1}. RLHF, on the other hand, is limited by its reliance on costly human annotations~\citep{ouyang2022training,stiennon2020learning} and inability to reflect community-level preferences through individual preferences alone. Our work proposes Reinforcement Learning from Community Feedback (RLCF), leveraging scalable community feedback signals which naturally emerge from community interactions, thereby inherently capturing community preferences.

\begin{figure}[t]
  \centering
  \includegraphics[width=\textwidth]{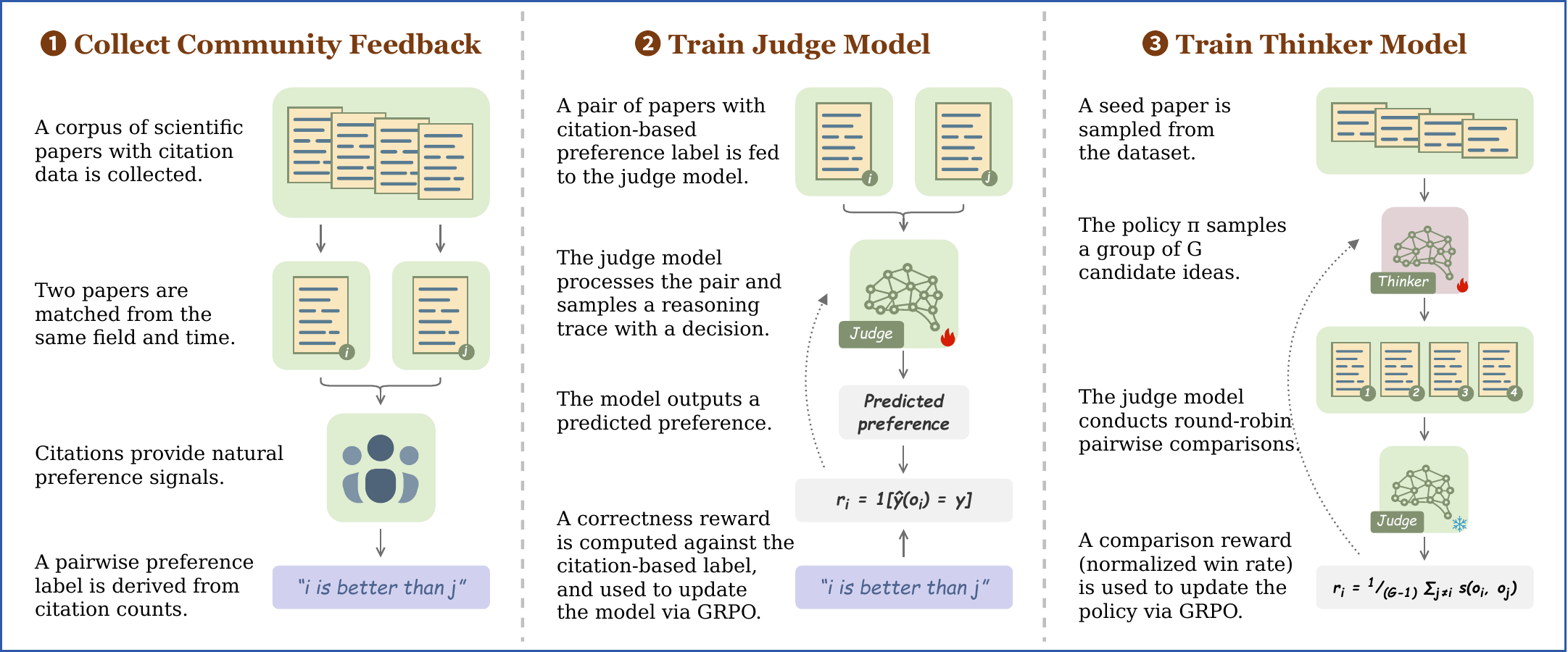}
  \caption{Overview of Reinforcement Learning from Community Feedback (RLCF).
\textbf{(1)} Community feedback is collected as pairwise preference signals
from naturally occurring community behavior.
\textbf{(2)} A preference model is trained via GRPO to predict which item
in a pair receives stronger community reception.
\textbf{(3)} A policy model is trained via comparison-based GRPO: for each
input, the policy samples a group of outputs, the preference model conducts
pairwise comparisons to produce scalar rewards, and the policy is updated
accordingly. In this work, we instantiate RLCF for scientific taste learning,
where community feedback is derived from citation signals.}
  \label{fig:main}
\end{figure}

\section{Reinforcement Learning from Community Feedback}
\label{sec:method}

To learn scientific taste, we introduce Reinforcement Learning from Community Feedback (RLCF), a training
paradigm that uses large-scale community signals as supervision. RLCF proceeds in three stages:
(1)~\textbf{construct community preference}, where we collect a community feedback signal to construct community preference data;
(2)~\textbf{preference modeling},
where we train \judge to predict potential impact of research ideas; and
(3)~\textbf{preference alignment}, where we use \judge as a reward model to supervise \thinker to generate scientific ideas with high potential impact.

\subsection{Community Feedback as Supervision}

We use citations as scientific community feedback signals, because citation count is a community verdict reflected through long-term interactions within a research community. High citation counts can indicate high scientific impact~\citep{zhao2025words}.
To mitigate field and time biases in raw citation counts, we construct training data by pairing articles from the same field and year, where the one with significantly more citations serves as the preferred (higher-impact) item.

Each training example consists of two scientific ideas represented by their titles and abstracts~\citep{zhao2025words, zhao2025naipv2debiasedpairwiselearning}, with a binary label indicating which one has higher relative citations.
We refer to the resulting dataset as \bench, which transforms community feedback into pairwise supervision signals, enabling scalable preference learning.

\subsection{Preference Modeling: \judge}
\label{sec:method_critic}
%It is also a generative reward model that produces a preference decision through reasoning.
\judge predicts which research idea has higher potential impact from pairwise comparisons. We train \judge through reinforcement learning on the training set of \bench, using Group Relative Policy Optimization (GRPO)~\citep{shao2024deepseekmathpushinglimitsmathematical}. For each input $x$, the policy $\pi_\theta$ samples a group of $G$ outputs $\{o_i\}_{i=1}^G$, each consisting of a reasoning trace and a preference prediction. The reward is a binary correctness signal:
\begin{equation}
\label{eq:reward}
r_i =
\begin{cases}
1, & \text{if } \hat{y}(o_i) = y, \\
0, & \text{otherwise.}
\end{cases}
\end{equation}
where $\hat{y}(o_i)$ extracts the predicted preference from output $o_i$ and $y$ is the observed label. Within each group, advantages are normalized:
$\hat{A}_i = (r_i - \operatorname{mean}(\mathbf{r})) /
\operatorname{std}(\mathbf{r})$.
The policy is updated by maximizing a clipped surrogate objective with a KL penalty toward a reference policy $\pi_{\mathrm{ref}}$:
\begin{equation}
\label{eq:grpo}
  \mathcal{J}(\theta) = \mathbb{E}_{x}\!\left[\,
    \frac{1}{G}\sum_{i=1}^{G}
      \min\!\Big(\rho_i\,\hat{A}_i,\;
        \operatorname{clip}\!\big(\rho_i,\,1{-}\epsilon,\,1{+}\epsilon\big)\,
        \hat{A}_i\Big)
    \;-\; \beta\,D_{\mathrm{KL}}\!\big(\pi_\theta \,\|\,
      \pi_{\mathrm{ref}}\big)
  \,\right],
\end{equation}
where $\rho_i = \pi_\theta(o_i \mid x)\,/\,\pi_{\mathrm{old}}(o_i \mid x)$ is the importance ratio, $\epsilon$ is the clipping range, and $\beta$ controls the strength of the KL penalty. Hyperparameter values are provided in Appendix~\ref{sec:app_training}.

\subsection{Preference Alignment: \thinker}
\label{sec:method_thinker}

We use \judge as a generative reward model to train \thinker, a policy model which learns to propose scientific ideas with high potential impact. This is an open-ended task with no ground-truth labels, and scoring a single scientific idea is difficult due to the lack of an objective and universal criterion. However, pairwise comparison is more natural and reliable, because it is easier to compare two ideas. We therefore design Comparison-Based GRPO~\citep{guo2025reward,zhang2026arenarlscalingrlopenended,wang2025pref}, using pairwise preferences from \judge to compute each idea's win rate within a group as the reward.

\paragraph{Comparison-Based GRPO.}
Given a prompt $x$ containing a seed paper, the policy $\pi_\theta$ samples a group of $G$ responses $\{o_1, \ldots, o_G\}$, each providing a candidate research idea. Instead of directly scoring each idea, we conduct a round-robin tournament judged by the reward model. Each candidate idea is compared with all the others by \judge, producing a total of $\binom{G}{2}$ pairwise comparison results. The comparison-based reward for $o_i$ is the research idea's win rate within the group:
\begin{equation}
\label{eq:comparison_reward}
  r_i \;=\; \frac{1}{G-1}\sum_{j \neq i} s(o_i, o_j),
\end{equation}
where $s(o_i, o_j) \in \{0, 1\}$ denotes whether the research idea of $o_i$ wins against that of $o_j$ under the reward model's judgement: $s(o_i, o_j)=1$ if it wins, and $0$ otherwise. Given these rewards, the training objective is the same as vanilla GRPO (Eq.~\ref{eq:grpo}).

In summary, Comparison-Based GRPO leverages comparison between sampled responses to calculate rewards, making it suitable for open-ended tasks, such as scientific ideation.

\begin{figure}[t]
  \centering
  \includegraphics[width=\columnwidth]{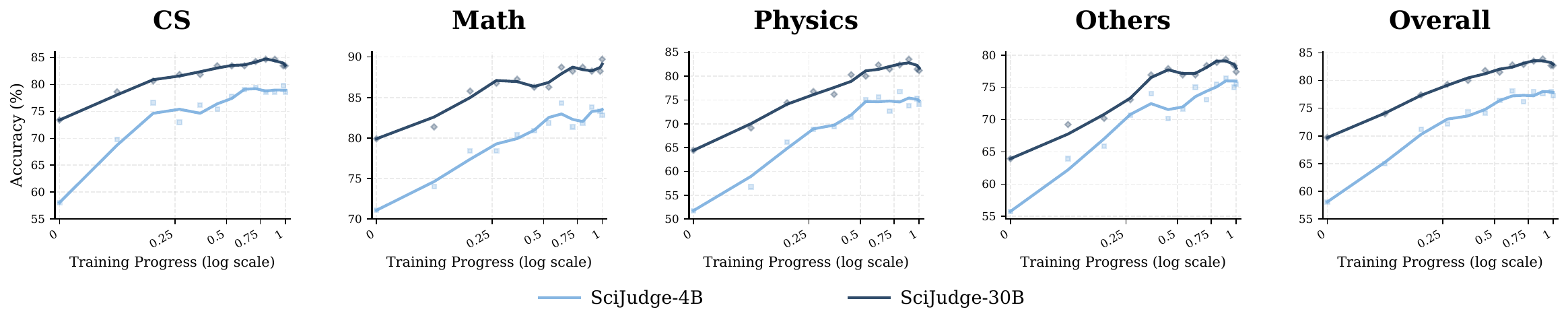}
  \vspace{-2em}
  \caption{Scaling performance of \judge on \bench (in-domain). Both \judgetable-Qwen3-4B and \judgetable-Qwen3-30B improve consistently across categories throughout training.}
  \label{fig:scaling_curve}
\end{figure}

\section{AI Can Learn Scientific Judgement}
\label{sec:exp_judge}

In this section, we focus on training \judge. We first establish the scaling trend of scientific judgement training (\S\ref{sec:exp_main}), then verify that the learned scientific judgement generalizes across time, fields, and other community metrics, and that its gains persist under controlled comparisons (\S\ref{sec:exp_ood}).

\subsection{Experimental Setup}
\label{sec:exp_setup}

\begin{table*}[!t]
  \centering
  \small
  \setlength{\tabcolsep}{4pt}
  \renewcommand{\arraystretch}{1.2}
  \begin{tabular}{l*{5}{>{\centering\arraybackslash}p{2.0cm}}}
    \toprule
    \textbf{Model} & \textbf{CS} & \textbf{Math} & \textbf{Physics} & \textbf{Others} & \textbf{Overall} \\
    \midrule
    \multicolumn{6}{c}{\textit{Open-source Models}} \\
    \midrule
    Qwen3-4B-Instruct & 58.1 & 71.1 & 51.8 & 55.8 & 58.1 \\
    \rowcolor{gray!10} \judgetable-Qwen3-4B & 78.6 \gain{20.6} & 82.8 \gain{11.8} & 74.1 \gain{22.4} & 75.5 \gain{19.7} & 77.3 \gain{19.2} \\
    Qwen3-30B-A3B-Instruct & 73.4 & 79.9 & 64.4 & 63.9 & 69.7 \\
    \rowcolor{gray!10} \judgetable-Qwen3-30B & 83.5 \gain{10.1} & 89.7 \gain{9.8} & 81.2 \gain{16.8} & 77.4 \gain{13.5} & 82.7 \gain{13.0} \\
    Qwen2.5-1.5B-Instruct & 4.0 & 6.4 & 5.3 & 5.8 & 5.3 \\
    \rowcolor{gray!10} \judgetable-Qwen2.5-1.5B & 60.1 \gain{56.0} & 66.7 \gain{60.3} & 60.3 \gain{55.0} & 62.0 \gain{56.2} & 61.9 \gain{56.6} \\
    Qwen2.5-3B-Instruct & 22.2 & 36.3 & 27.1 & 28.4 & 28.0 \\
    \rowcolor{gray!10} \judgetable-Qwen2.5-3B & 73.4 \gain{51.2} & 80.4 \gain{44.1} & 70.6 \gain{43.5} & 68.3 \gain{39.9} & 72.8 \gain{44.8} \\
    Qwen2.5-7B-Instruct & 48.0 & 53.9 & 48.8 & 51.4 & 50.2 \\
    \rowcolor{gray!10} \judgetable-Qwen2.5-7B & 75.8 \gain{27.8} & 85.8 \gain{31.9} & 73.2 \gain{24.4} & 74.0 \gain{22.6} & 76.6 \gain{26.4} \\
    Qwen2.5-14B-Instruct & 53.2 & 70.1 & 56.2 & 55.8 & 58.2 \\
    \rowcolor{gray!10} \judgetable-Qwen2.5-14B & 80.2 \gain{27.0} & 86.8 \gain{16.7} & 77.1 \gain{20.9} & 77.4 \gain{21.6} & 79.9 \gain{21.7} \\
    Qwen2.5-32B-Instruct & 62.5 & 73.5 & 60.3 & 60.1 & 63.5 \\
    \rowcolor{gray!10} \judgetable-Qwen2.5-32B & 82.7 \gain{20.2} & 86.8 \gain{13.2} & 79.4 \gain{19.1} & 78.4 \gain{18.3} & 81.5 \gain{18.0} \\
    Llama3.1-8B-Instruct & 47.6 & 53.9 & 45.3 & 38.9 & 46.3 \\
    \rowcolor{gray!10} \judgetable-Llama3.1-8B & 58.9 \gain{11.3} & 62.8 \gain{8.8} & 54.7 \gain{9.4} & 57.2 \gain{18.3} & 57.9 \gain{11.6} \\
    \midrule
    \multicolumn{6}{c}{\textit{Strong LLM Baselines}} \\
    \midrule
    GPT-5.4 Thinking & 87.5 & 84.8 & 77.4 & 78.4 & 81.6 \\
    MiniMax-M3 Thinking & 86.7 & 83.8 & 72.9 & 72.6 & 78.5 \\
    GLM-5.1 Thinking & 84.3 & 80.9 & 68.8 & 75.0 & 76.4 \\
    DeepSeek-v3.2 Thinking & 73.0 & 76.0 & 63.2 & 65.4 & 68.7 \\
    \bottomrule
  \end{tabular}
  \caption{Main results on \bench (in-domain test set). We report pairwise accuracy (\%) with position-swap consistency for predicting which paper has higher citations. Scores and gains are rounded to one decimal place; gains are computed from unrounded accuracies.}
  \label{tab:exp_main}
\end{table*}

\paragraph{Training Data.}
We construct \bench from 2.1M arXiv papers published through 2024, yielding 720,341 field- and time-matched preference pairs across Computer Science, Mathematics, Physics, and other fields. Preference labels are derived from citation counts. See Appendix~\ref{sec:app_dataset} for construction details.

\paragraph{Test Sets.}
We evaluate \judge under complementary in-domain, temporal, metric-transfer, and controlled-comparison settings. The citation-based settings match pairs by field and publication time, so comparisons involve papers from similar areas and periods. \textbf{(1) Main (In-domain):} 1,000 citation-preference pairs sampled from an 8,830-pair source pool and stratified across CS, Physics, Math, and Others. This setting measures in-distribution citation-preference judgement across major scientific fields. \textbf{(2) Temporal OOD:} 904 pairs from papers published in 2025, after the training period, testing whether learned citation preferences extrapolate to future papers. Both citation-based test sets are detailed in Appendix~\ref{sec:app_citation_tests}. \textbf{(3) Metric OOD:} We use 611 review-score pairs drawn from 2017--2026 submissions to the International Conference on Learning Representations (ICLR), a peer-reviewed machine-learning conference. We also use 599 arXiv paper pairs labelled by Altmetric attention scores, which summarize online attention to research outputs (Appendix~\ref{sec:app_metric_tests}). \textbf{(4) Controlled comparisons:} 541 citation-preference pairs linked to the same CSRankings faculty member and matched by publication quarter and local subcategory, and 245 embedding-matched topic-control pairs (Appendix~\ref{sec:app_controlled_tests}). For field generalization, we separately train CS-only variants and evaluate them across the main-test fields. An additional evaluation on 160 bioRxiv biology pairs is provided in Appendix~\ref{sec:app_bio_ood}. See Appendix~\ref{sec:app_dataset} for the complete field-to-subcategory mapping.

\paragraph{Models.}
We train \judge (\judgetable for short) on the Qwen2.5-Instruct series (1.5B, 3B, 7B, 14B, 32B parameters)~\citep{qwen2.5}, Qwen3-4B-Instruct-2507, Qwen3-30B-A3B-Instruct-2507~\citep{yang2025qwen3technicalreport}, and Llama-3.1-8B-Instruct~\citep{llama3.1}. Each trained model is named \judgetable-\{base\}, e.g., \judgetable-Qwen3-4B. We compare against untrained baselines and strong LLM baselines (Table~\ref{tab:exp_main}). See Appendix~\ref{sec:app_training} for full model details.

\paragraph{Training.}
We use Group Relative Policy Optimization (GRPO)~\citep{shao2024deepseekmathpushinglimitsmathematical} with preference prediction correctness as the verifiable reward. The model generates a reasoning trace followed by a prediction (A or B), and receives reward 1 if correct, 0 otherwise. See Appendix~\ref{sec:app_training} for training configurations and computational resources.

\paragraph{Evaluation.}
To mitigate position bias, we evaluate each pair twice by swapping paper order (A$\leftrightarrow$B) and score a prediction as correct only if consistent across both orderings~\citep{zheng2023judgingllmasajudgemtbenchchatbot}. See Appendix~\ref{sec:app_evaluation} for details.

\begin{table*}[!t]
  \centering
  \small
  \setlength{\tabcolsep}{4pt}
  \renewcommand{\arraystretch}{1.2}
  \begin{tabular}{l*{5}{>{\centering\arraybackslash}p{2.0cm}}}
    \toprule
    \textbf{Model} & \textbf{CS} & \textbf{Math} & \textbf{Physics} & \textbf{Others} & \textbf{Overall} \\
    \midrule
    Qwen3-4B-Instruct & 69.7 & 69.2 & 66.0 & 53.0 & 64.7 \\
    \rowcolor{gray!10} \judgetable-Qwen3-4B & 84.0 \gain{14.3} & 89.4 \gain{20.2} & 79.0 \gain{13.0} & 74.5 \gain{21.5} & 80.9 \gain{16.2} \\
    Qwen3-30B-A3B-Instruct & 74.0 & 77.9 & 72.7 & 63.5 & 71.7 \\
    \rowcolor{gray!10} \judgetable-Qwen3-30B & 85.7 \gain{11.7} & 86.5 \gain{8.7} & 83.3 \gain{10.7} & 77.0 \gain{13.5} & 83.1 \gain{11.4} \\
    Qwen2.5-1.5B-Instruct & 30.3 & 17.3 & 20.7 & 20.0 & 23.3 \\
    \rowcolor{gray!10} \judgetable-Qwen2.5-1.5B & 63.3 \gain{33.0} & 73.1 \gain{55.8} & 68.7 \gain{48.0} & 63.5 \gain{43.5} & 66.3 \gain{42.9} \\
    Qwen2.5-3B-Instruct & 24.0 & 41.3 & 25.0 & 18.5 & 25.1 \\
    \rowcolor{gray!10} \judgetable-Qwen2.5-3B & 64.7 \gain{40.7} & 69.2 \gain{27.9} & 67.7 \gain{42.7} & 63.5 \gain{45.0} & 65.9 \gain{40.8} \\
    Qwen2.5-7B-Instruct & 45.7 & 54.8 & 54.7 & 34.0 & 47.1 \\
    \rowcolor{gray!10} \judgetable-Qwen2.5-7B & 65.7 \gain{20.0} & 76.9 \gain{22.1} & 70.7 \gain{16.0} & 66.5 \gain{32.5} & 68.8 \gain{21.7} \\
    Qwen2.5-14B-Instruct & 54.7 & 66.3 & 61.0 & 45.5 & 56.1 \\
    \rowcolor{gray!10} \judgetable-Qwen2.5-14B & 73.0 \gain{18.3} & 73.1 \gain{6.7} & 74.3 \gain{13.3} & 68.5 \gain{23.0} & 72.5 \gain{16.4} \\
    Qwen2.5-32B-Instruct & 56.0 & 67.3 & 66.0 & 50.0 & 59.3 \\
    \rowcolor{gray!10} \judgetable-Qwen2.5-32B & 70.7 \gain{14.7} & 73.1 \gain{5.8} & 77.7 \gain{11.7} & 73.5 \gain{23.5} & 73.9 \gain{14.6} \\
    Llama3.1-8B-Instruct & 41.3 & 56.7 & 51.0 & 37.5 & 45.5 \\
    \rowcolor{gray!10} \judgetable-Llama3.1-8B & 59.0 \gain{17.7} & 67.3 \gain{10.6} & 62.7 \gain{11.7} & 48.0 \gain{10.5} & 58.7 \gain{13.3} \\
    \bottomrule
  \end{tabular}
  \caption{Temporal OOD results on 904 pairs from papers published in 2025, after the training period. We report pairwise accuracy (\%) with position-swap consistency. Scores and gains are rounded to one decimal place; gains are computed from unrounded accuracies.}
  \label{tab:exp_ood_year}
\end{table*}

\subsection{Scaling Trends}

\label{sec:exp_main}

\judge learns scientific judgement effectively across all model scales and series, revealing scaling behavior with both data amount and model size (Figure~\ref{fig:scaling_curve}, Table~\ref{tab:exp_main}).

\paragraph{Data scaling leads to better performance.}
Scientific judgement performance improves steadily with more training data. The learning curves indicate an approximately log-linear relationship between data scale and performance.
From the untrained base model to the final checkpoint used for manuscript-facing comparisons, the overall score rises from 58.1 to 77.3 for Qwen3-4B and from 69.7 to 82.7 for Qwen3-30B-A3B, with gains observed in all fields.

\paragraph{Model size scaling leads to better performance.}
Scientific judgement performance improves consistently with model size. In the Qwen2.5 family, average accuracy after \judgetable training increases from 61.9 (1.5B) to 72.8 (3B), 76.6 (7B), 79.9 (14B), and 81.5 (32B).
A similar trend holds for Qwen3, where \judgetable-Qwen3-30B outperforms \judgetable-Qwen3-4B (82.7 vs.\ 77.3 average accuracy). In terms of average accuracy, \judgetable-Qwen3-30B also surpasses all listed strong LLM baselines, including GPT-5.4 Thinking.

\begin{tcolorbox}[colback=blue!8, colframe=blue!45, boxrule=0.8pt, arc=2mm,
  left=5pt, right=5pt, top=5pt, bottom=5pt]
\textbf{Takeaway 1} \quad  Learning scientific judgement
is scalable. We observe an approximately log-linear improvement in test-set performance as the amount of training data increases. Performance also improves with model size, and the final \judgetable-Qwen3-30B checkpoint surpasses all listed strong LLM baselines in average accuracy. Together, these results show that preference modeling through reinforcement learning scales with both data and model size.
\end{tcolorbox}

\begin{table*}[!t]
  \centering
  \small
  \setlength{\tabcolsep}{4pt}
  \renewcommand{\arraystretch}{1.2}
  \begin{tabular}{l*{5}{>{\centering\arraybackslash}p{2.0cm}}}
    \toprule
    & \multicolumn{1}{c}{\textit{In-Domain}} & \multicolumn{3}{c}{\textit{Out-of-Domain}} & \\
    \cmidrule(lr){2-2} \cmidrule(lr){3-5}
    \textbf{Model} & \textbf{CS} & \textbf{Math} & \textbf{Physics} & \textbf{Others} & \textbf{Overall} \\
    \midrule
    Qwen3-4B-Instruct & 58.1 & 71.1 & 51.8 & 55.8 & 58.1 \\
    \rowcolor{gray!10} \judgetable-Qwen3-4B & 75.4 \gain{17.3} & 77.9 \gain{6.9} & 60.6 \gain{8.8} & 65.9 \gain{10.1} & 68.9 \gain{10.8} \\
    Qwen3-30B-A3B-Instruct & 73.4 & 79.9 & 64.4 & 63.9 & 69.7 \\
    \rowcolor{gray!10} \judgetable-Qwen3-30B & 82.7 \gain{9.3} & 83.3 \gain{3.4} & 70.9 \gain{6.5} & 71.2 \gain{7.2} & 76.4 \gain{6.7} \\
    Qwen2.5-1.5B-Instruct & 4.0 & 6.4 & 5.3 & 5.8 & 5.3 \\
    \rowcolor{gray!10} \judgetable-Qwen2.5-1.5B & 55.7 \gain{51.6} & 63.2 \gain{56.9} & 52.4 \gain{47.1} & 51.4 \gain{45.7} & 55.2 \gain{49.9} \\
    Qwen2.5-3B-Instruct & 22.2 & 36.3 & 27.1 & 28.4 & 28.0 \\
    \rowcolor{gray!10} \judgetable-Qwen2.5-3B & 72.6 \gain{50.4} & 75.0 \gain{38.7} & 63.5 \gain{36.5} & 63.9 \gain{35.6} & 68.2 \gain{40.2} \\
    Qwen2.5-7B-Instruct & 48.0 & 53.9 & 48.8 & 51.4 & 50.2 \\
    \rowcolor{gray!10} \judgetable-Qwen2.5-7B & 74.2 \gain{26.2} & 80.9 \gain{27.0} & 66.5 \gain{17.6} & 70.7 \gain{19.2} & 72.2 \gain{22.0} \\
    Qwen2.5-14B-Instruct & 53.2 & 70.1 & 56.2 & 55.8 & 58.2 \\
    \rowcolor{gray!10} \judgetable-Qwen2.5-14B & 83.1 \gain{29.8} & 83.8 \gain{13.7} & 76.5 \gain{20.3} & 76.4 \gain{20.7} & 79.6 \gain{21.4} \\
    Qwen2.5-32B-Instruct & 62.5 & 73.5 & 60.3 & 60.1 & 63.5 \\
    \rowcolor{gray!10} \judgetable-Qwen2.5-32B & 83.5 \gain{21.0} & 87.3 \gain{13.7} & 79.1 \gain{18.8} & 78.9 \gain{18.7} & 81.8 \gain{18.3} \\
    Llama3.1-8B-Instruct & 47.6 & 53.9 & 45.3 & 38.9 & 46.3 \\
    \rowcolor{gray!10} \judgetable-Llama3.1-8B & 56.1 \gain{8.5} & 57.4 \gain{3.4} & 54.7 \gain{9.4} & 54.3 \gain{15.4} & 55.5 \gain{9.2} \\
    \bottomrule
  \end{tabular}
  \caption{Field OOD results. The trained rows use independent CS-only training runs and are evaluated on all fields. CS is in-domain while Math, Physics, and Others are out-of-domain. Scores and gains are rounded to one decimal place; gains are computed from unrounded accuracies.}
  \label{tab:exp_ood_domain}
\end{table*}

\subsection{Generalization and Controlled Comparisons}

\label{sec:exp_ood}

We now test whether learned scientific judgement generalizes beyond the training distribution along three axes: time, field, and target metric, and whether its gains persist under stricter controlled comparisons.

\paragraph{Temporal generalization to future preferences.}
Training with RLCF substantially improves prediction of future paper preferences. On papers published in 2025, Qwen3-4B improves from 64.7 to 80.9 average accuracy, while Qwen3-30B-A3B improves from 71.7 to 83.1 (Table~\ref{tab:exp_ood_year}). Gains are consistent across most base models and fields and persist when evaluation is restricted to papers published after the corresponding base models were released (Appendix~\ref{sec:app_release_time}). These post-release gains cannot be attributed to direct pretraining exposure to the evaluated papers and suggest that RLCF learns stable signals of community values that generalize beyond the training period.

\paragraph{Generalization to unseen fields.}
To evaluate cross-field generalization, we train separate \judge variants exclusively on CS papers. Both Qwen3 variants improve over their base models in every unseen field, with overall average accuracy rising from 58.1 to 68.9 for Qwen3-4B and from 69.7 to 76.4 for Qwen3-30B (Table~\ref{tab:exp_ood_domain}). This cross-field transfer is notable because different disciplines vary substantially in knowledge, style, and data distribution, yet still exhibit shared patterns of scientific value that can be learned and transferred. These results suggest that RLCF helps models acquire generalizable scientific judgement rather than merely fitting field-specific signals.

\paragraph{Generalization across community metrics.}
\judge also improves when the target metric changes from citations to other community metrics. On ICLR review-score pairs, Qwen3-4B improves by 14.1 percentage points to 79.4\% accuracy, while Qwen3-30B improves by 11.8 points to 88.5\%. On Altmetric attention-score pairs, Qwen3-4B improves by 12.2 percentage points to 82.6\% accuracy, while Qwen3-30B improves by 8.2 points to 86.3\% (Table~\ref{tab:exp_ood_iclr}). These results indicate that citation-trained judgement transfers to related but distinct community metrics. Appendix~\ref{sec:app_metric_tests} defines both metrics and details the test-set construction.

\begin{table*}[!t]
  \centering
  \footnotesize
  \setlength{\tabcolsep}{5pt}
  \renewcommand{\arraystretch}{1.2}

  \begin{subtable}[t]{0.495\textwidth}
    \centering
    \begin{tabular}{lcc}
      \toprule
      \textbf{Model} & & \textbf{Acc. (\%)} \\
      \midrule
      Qwen3-4B-Instruct & & 65.3 \\
      \rowcolor{gray!10} \judgetable-Qwen3-4B & & 79.4 \gain{14.1} \\
      Qwen3-30B-A3B-Instruct & & 76.8 \\
      \rowcolor{gray!10} \judgetable-Qwen3-30B & & 88.5 \gain{11.8} \\
      Qwen2.5-1.5B-Instruct & & 1.6 \\
      \rowcolor{gray!10} \judgetable-Qwen2.5-1.5B & & 73.7 \gain{72.0} \\
      Qwen2.5-3B-Instruct & & 15.2 \\
      \rowcolor{gray!10} \judgetable-Qwen2.5-3B & & 78.4 \gain{63.2} \\
      Qwen2.5-7B-Instruct & & 46.6 \\
      \rowcolor{gray!10} \judgetable-Qwen2.5-7B & & 78.2 \gain{31.6} \\
      Qwen2.5-14B-Instruct & & 42.6 \\
      \rowcolor{gray!10} \judgetable-Qwen2.5-14B & & 85.4 \gain{42.9} \\
      Qwen2.5-32B-Instruct & & 58.4 \\
      \rowcolor{gray!10} \judgetable-Qwen2.5-32B & & 84.8 \gain{26.4} \\
      Llama3.1-8B-Instruct & & 46.6 \\
      \rowcolor{gray!10} \judgetable-Llama3.1-8B & & 68.9 \gain{22.3} \\
      \bottomrule
    \end{tabular}
    \caption{ICLR Metric OOD}
    \label{tab:exp_ood_iclr_panel}
  \end{subtable}
  \hfill
  \begin{subtable}[t]{0.495\textwidth}
    \centering
    \begin{tabular}{lcc}
      \toprule
      \textbf{Model} & & \textbf{Acc. (\%)} \\
      \midrule
      Qwen3-4B-Instruct & & 70.5 \\
      \rowcolor{gray!10} \judgetable-Qwen3-4B & & 82.6 \gain{12.2} \\
      Qwen3-30B-A3B-Instruct & & 78.1 \\
      \rowcolor{gray!10} \judgetable-Qwen3-30B & & 86.3 \gain{8.2} \\
      Qwen2.5-1.5B-Instruct & & 8.2 \\
      \rowcolor{gray!10} \judgetable-Qwen2.5-1.5B & & 63.1 \gain{54.9} \\
      Qwen2.5-3B-Instruct & & 28.0 \\
      \rowcolor{gray!10} \judgetable-Qwen2.5-3B & & 76.3 \gain{48.2} \\
      Qwen2.5-7B-Instruct & & 59.3 \\
      \rowcolor{gray!10} \judgetable-Qwen2.5-7B & & 79.0 \gain{19.7} \\
      Qwen2.5-14B-Instruct & & 67.1 \\
      \rowcolor{gray!10} \judgetable-Qwen2.5-14B & & 84.1 \gain{17.0} \\
      Qwen2.5-32B-Instruct & & 71.0 \\
      \rowcolor{gray!10} \judgetable-Qwen2.5-32B & & 83.0 \gain{12.0} \\
      Llama3.1-8B-Instruct & & 49.1 \\
      \rowcolor{gray!10} \judgetable-Llama3.1-8B & & 64.9 \gain{15.9} \\
      \bottomrule
    \end{tabular}
    \caption{Altmetric Metric OOD}
    \label{tab:exp_ood_altmetric_panel}
  \end{subtable}

  \vspace{2pt}
  \caption{Metric OOD results using ICLR peer-review scores (left) and Altmetric attention scores (right) as preference signals instead of citations. Scores and gains are rounded to one decimal place; gains are computed from unrounded accuracies.}
  \label{tab:exp_ood_iclr}
\end{table*}

\paragraph{Judgement gains persist under author, institution, and topic controls.}
On the 541-pair author and institution control set and the 245-pair embedding-based topic-control set, both Qwen3 model sizes improve consistently, with gains ranging from 8.3 to 11.8 percentage points (Fig.~\ref{fig:controlled_comparisons}). These results reduce simple explanations based solely on author, institutional, temporal, or broad topical cues. Appendix~\ref{sec:app_controlled_tests} describes the external resources and matching procedures used for both controls.

\begin{figure*}[!t]
  \centering
  \includegraphics[width=\textwidth]{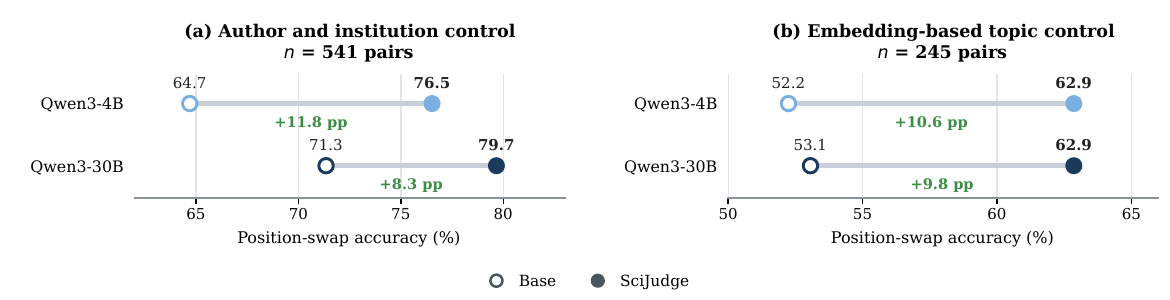}
  \caption{Controlled comparisons under matched author, institution, and topic context. (a) Author and institution control (541 pairs). (b) Embedding-based topic control (245 pairs). Hollow and filled markers denote the base and \judge variants, respectively, and lines connect variants of the same model. Accuracy requires correct predictions under both A/B orderings; green labels show gains in percentage points. Panels use dataset-specific accuracy scales.}
  \label{fig:controlled_comparisons}
\end{figure*}

\noindent We additionally verify that training preserves general-purpose capabilities (Appendix~\ref{sec:app_general}). Given this generalizability, we next ask whether \judge can serve as a reward signal for improving scientific ideation.

\begin{tcolorbox}[colback=blue!8, colframe=blue!45, boxrule=0.8pt, arc=2mm,
  left=5pt, right=5pt, top=5pt, bottom=5pt]
\textbf{Takeaway 2} \quad Learned scientific judgement generalizes across time to future papers, across fields beyond the CS-only training distribution, and across community metrics to peer-review scores and Altmetric attention scores. Its gains also persist under controlled comparisons matching author and institutional context, publication time, and topic similarity. Together, these results suggest that \judge captures transferable patterns from community feedback, providing evidence that AI can learn a broadly generalizable component of scientific taste.
\end{tcolorbox}

\section{AI Can Learn Ideation with High Potential Impact}
\label{sec:exp_generation}

In this section, we focus on training \thinker using Comparison-Based GRPO (\S~\ref{sec:method_thinker}) with \judge as the reward model (\S~\ref{sec:exp_judge}).

\subsection{Experimental Setup}
\label{sec:exp_gen_setup}

\paragraph{Data.}
We use high-citation papers from 2025 as seed papers. The training set consists of 4,000 papers published between January and July. For evaluation, we use 200 papers from the same period as an in-domain test set and 200 papers from August-December as an out-of-domain test set.

\paragraph{Models.}
We train \thinker on two policy models: Qwen3-30B-A3B-Thinking-2507 and Qwen3-4B-Thinking-2507, both using \judgetable-Qwen3-4B as the reward model. We refer to the two trained policies as \thinkertable-30B and \thinkertable-4B. To explore the gains from preference learning, we also train a version of each policy using the base model of \judgetable-Qwen3-4B (Qwen3-4B-Instruct) as the reward model.

\paragraph{Evaluation.}
We evaluate \thinker by its win rate against the base policy. For each seed paper, both models propose a research idea, and we use three strong LLMs (GPT-5.2-high, GLM-5 and Gemini 3 Pro) to judge which idea has higher potential impact via majority vote, detailed in Appendix~\ref{sec:app_thinker_eval}. In the same way, we also evaluate \thinkertable-30B's win rates against the three strong LLM baselines.

\begin{figure}[t]
    \centering
    \begin{subfigure}[t]{0.49\textwidth}
        \centering
        \includegraphics[width=\linewidth]{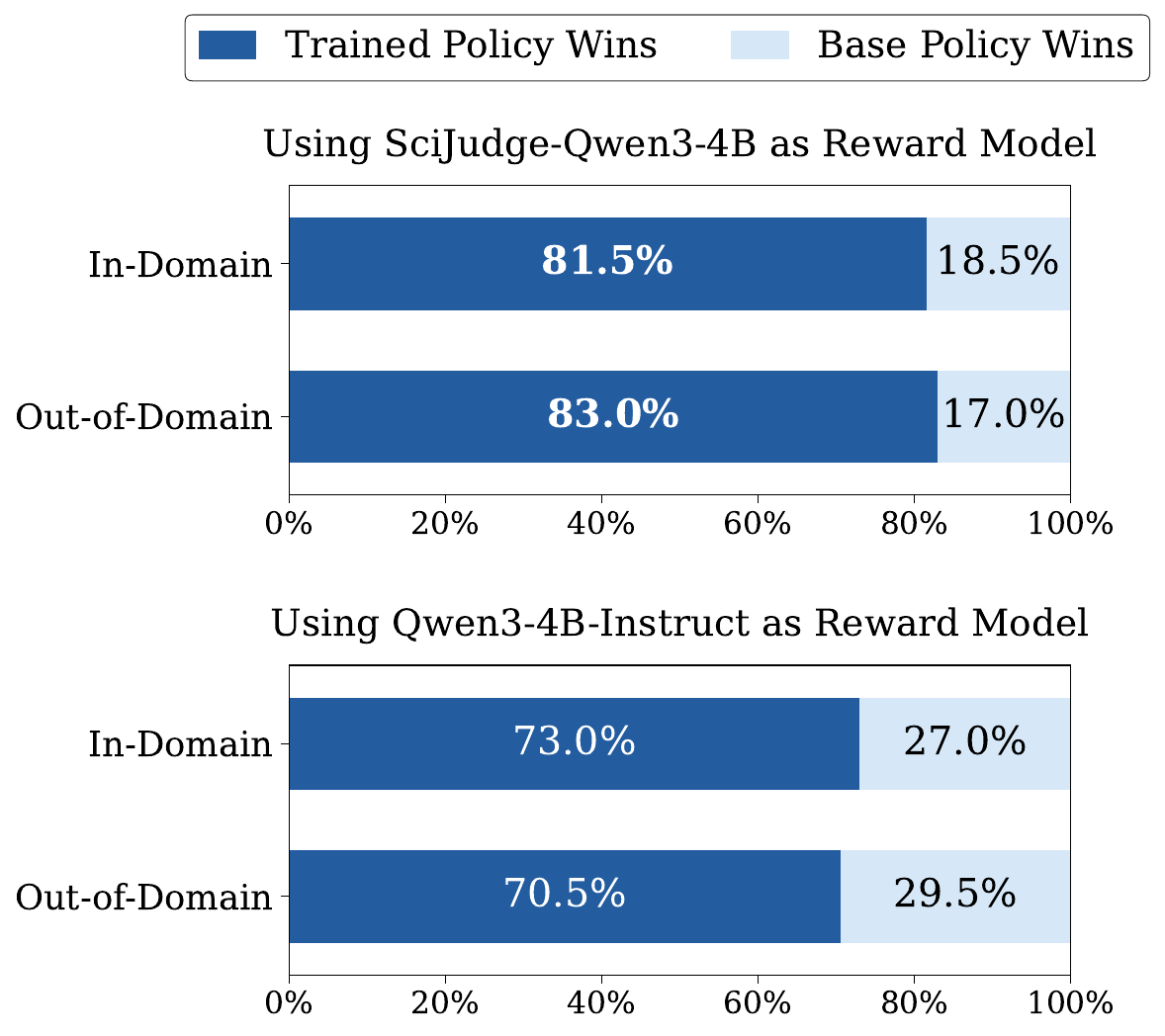}
        \caption{Base Policy: Qwen3-30B-A3B-Thinking-2507}
        \label{fig:thinker_30b}
    \end{subfigure}
    \hfill
    \begin{subfigure}[t]{0.49\textwidth}
        \centering
        \includegraphics[width=\linewidth]{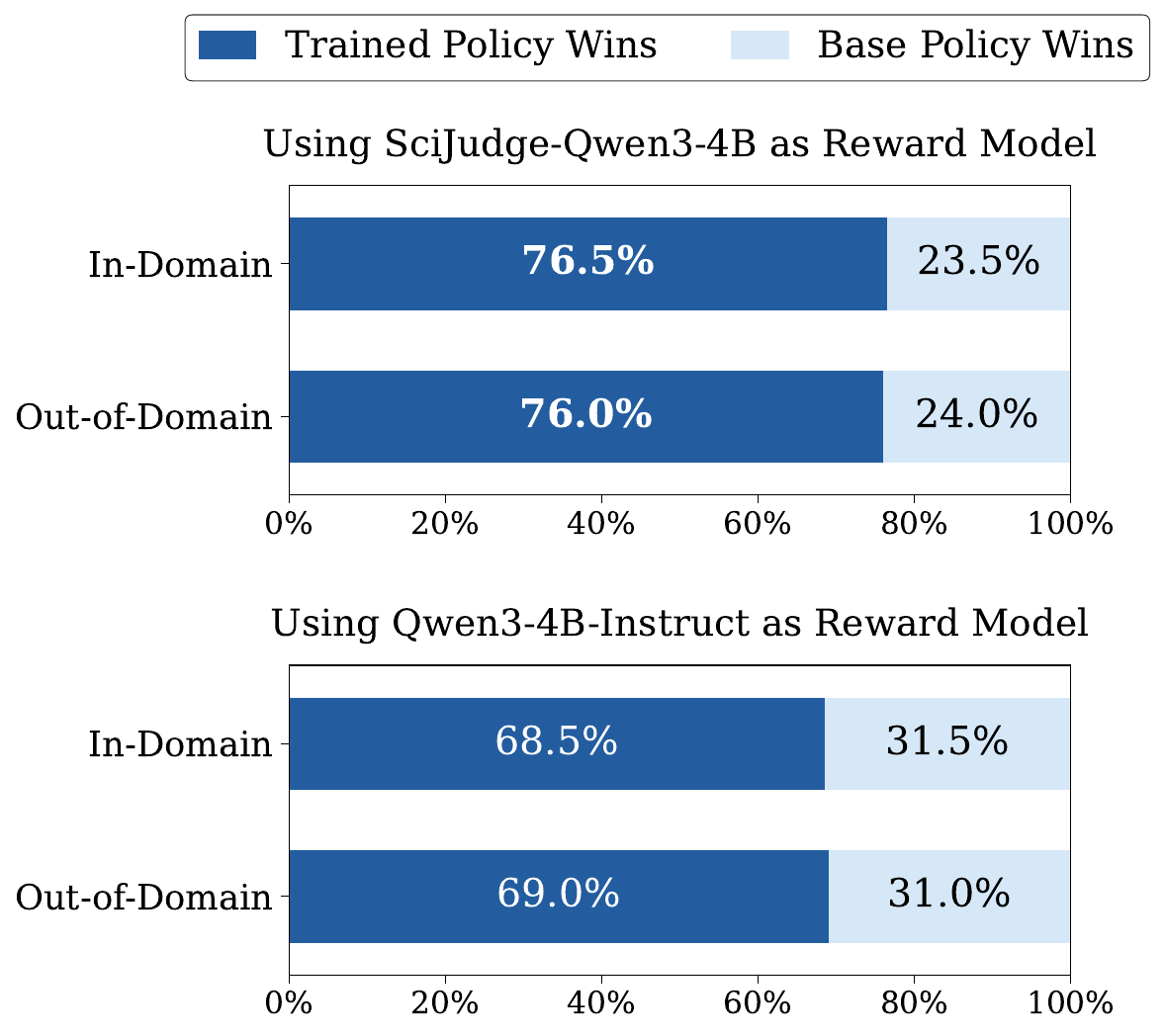}
        \caption{Base Policy: Qwen3-4B-Thinking-2507}
        \label{fig:thinker_4b}
    \end{subfigure}
    
    \caption{\thinker's performance under different base policy models and reward models. The top row uses \judgetable-Qwen3-4B as the reward model, while the bottom row uses the baseline reward model, Qwen3-4B-Instruct.}
    \label{fig:exp_human_eval}
\end{figure}

\subsection{Results}

\paragraph{Substantial improvements in scientific ideation with high potential impact.} As shown in Figure~\ref{fig:exp_human_eval}, after training with the \judgetable-Qwen3-4B reward model, \thinker remarkably outperforms the base policy at both model sizes, achieving in-domain win rates of 81.5\% and 76.5\% for the 30B and 4B models, respectively. The gains also transfer to seed papers published after the training period (30B: 83.0\%, 4B: 76.0\%), indicating that the ideation improvements persist on this temporal holdout.

\paragraph{\judge is a more effective generative reward model than the baseline.} For both model sizes, the policy trained with \judgetable-Qwen3-4B significantly outperforms the one trained with Qwen3-4B-Instruct (Figure~\ref{fig:exp_human_eval}) on in-domain (e.g., 30B: 81.5\% vs.\ 73.0\%) and out-of-domain (e.g., 30B: 83.0\% vs.\ 70.5\%) test sets.

\paragraph{\thinker is comparable to strong LLM baselines in ideation.} \thinkertable-30B surpasses GPT-5.2 and GLM-5 after training (Table~\ref{tab:exp_winrate_sota}), achieving an average win-rate of 54.2\% across the three strong LLM baselines.

\begin{table*}[!t]
  \centering
  \footnotesize
  \setlength{\tabcolsep}{1.2pt}
  \renewcommand{\arraystretch}{1.2}
  
  \begin{subtable}[t]{0.495\textwidth}
    \centering
    \begin{tabular}{lcccc}
      \toprule
      \textbf{Model} & \textbf{GPT-5.2} & \textbf{GLM-5} & \textbf{Gemini 3 Pro} & \textbf{Avg.} \\
      \midrule
      Qwen3-30B & 37.5 & 33.0 & 20.5 & 30.3 \\
      \rowcolor{gray!10} \thinkertable-30B & 61.0\gain{23.5} & 58.5\gain{25.5} & 43.0\gain{22.5} & 54.2\gain{23.9} \\
      \bottomrule
    \end{tabular}
    \caption{In-Domain Win Rates (\%)}
    \label{tab:winrate_indomain}
  \end{subtable}
  \hfill
  \begin{subtable}[t]{0.495\textwidth}
    \centering
    \begin{tabular}{lcccc}
      \toprule
      \textbf{Model} & \textbf{GPT-5.2} & \textbf{GLM-5} & \textbf{Gemini 3 Pro} & \textbf{Avg.} \\
      \midrule
      Qwen3-30B & 36.0 & 29.5 & 18.0 & 27.8 \\
      \rowcolor{gray!10} \thinkertable-30B & 59.0\gain{23.0} & 61.0\gain{31.5} & 42.5\gain{24.5} & 54.2\gain{26.4} \\
      \bottomrule
    \end{tabular}
    \caption{Out-of-Domain Win Rates (\%)}
    \label{tab:winrate_ood}
  \end{subtable}
  
  \vspace{2pt}
  \caption{Win rates of \thinkertable-30B and its base policy (Qwen3-30B-A3B-Thinking) against three strong LLM baselines.}
  \label{tab:exp_winrate_sota}
\end{table*}

\begin{tcolorbox}[colback=blue!8, colframe=blue!45, boxrule=0.8pt, arc=2mm,
  left=5pt, right=5pt, top=5pt, bottom=5pt]
\textbf{Takeaway 3}\quad
AI can learn ideation with high potential impact. Through RL training, \thinker proposes scientific ideas with higher potential impact, and the gains transfer to seed papers published after the training period. Furthermore, \judge serves as an effective reward model for training \thinker in these experiments. Therefore, RLCF successfully enhances both scientific judgement and ideation capabilities of models, offering an effective and promising pathway for AI to learn scientific taste.
\end{tcolorbox}

\section{Conclusion}
\label{sec:conclusion}

In this work, we use the term \textit{scientific taste} to refer to the ability to judge and propose research ideas with potential for long-term scientific impact. To learn this capability, we propose RLCF, which uses citation signals as community feedback for preference modeling and alignment, yielding \judge for scientific judgement and \thinker for scientific ideation.
Experiments show that \judge scales with data and model size, generalises to future-year papers, other community metrics, and unseen fields, and retains its gains under controlled comparisons matching author and institutional context, publication time, and topic similarity. Using \judge as a generative reward model, \thinker proposes research ideas judged to have higher potential impact than those proposed by strong LLM baselines.
Overall, our results suggest that AI can learn scientific taste from large-scale community feedback, marking an important step towards AI systems that could help accelerate scientific discovery.

\section*{Limitations and Future Work}

Our work has several limitations. First, scientific taste may involve more than scientific judgement and ideation with high potential impact. Future work could explore broader formulations of scientific taste such as assessing the feasibility of experimental designs and better recognizing distinctive and diverse research ideas.

Second, citations are an imperfect form of community feedback. Some high-potential papers may receive few citations initially but become highly influential later. Modeling citation dynamics may help capture such delayed-impact patterns. In addition, our field categorization remains limited in granularity, and more fine-grained field clustering may improve the quality of preference pairs. The controlled evaluations reduce simple author-, institution-, and topic-level explanations. However, they cannot remove all biases in citation-based community feedback.

Third, our ideation evaluation mainly relies on strong LLM evaluators. Since the proposed ideas are not experimentally validated, the evaluation may not fully reflect their potential impact. Future work can implement a subset of these ideas.

Finally, \judge is trained mainly on titles and abstracts. Incorporating richer paper context, such as related work sections, may improve scientific judgement.

\section*{Ethical Considerations}

This work uses publicly available paper metadata and does not involve private or sensitive information. Citation-based signals may encode biases across fields or topics. To mitigate this, we construct preference pairs within the same subcategory and similar publication time windows. We also evaluate controlled comparisons matching author and institutional context, publication time, and topic similarity. Such systems should assist, rather than replace, human judgement in scientific evaluation. In addition, scientific ideation models may be misused to generate low-quality ideas at scale or facilitate academic misconduct. We therefore encourage responsible use under human oversight.

\bibliographystyle{unsrtnat}
\bibliography{custom}

\newpage
\beginappendix

\startcontents[app]
\begingroup
  \renewcommand{\contentsname}{Appendix Contents}
  \section*{\contentsname}
  \printcontents[app]{}{1}{}
\endgroup
\newpage

\section{Dataset Construction Details}
\label{sec:app_dataset}

This appendix provides detailed procedures for constructing \bench, including training data and test sets.

\subsection{Data Statistics}

Table~\ref{tab:app_data_stats} summarizes the dataset statistics across different fields.

\begin{table}[h]
  \centering
  \small
  \begin{tabular}{lrr}
    \toprule
    \textbf{Field} & \textbf{\# Paper records} & \textbf{\# Pairs} \\
    \midrule
    Computer Science & 151,060 & 75,530 \\
    Mathematics & 64,364 & 32,182 \\
    Physics & 1,196,852 & 598,426 \\
    Others & 28,406 & 14,203 \\
    \midrule
    \textbf{Total} & \textbf{1,440,682} & \textbf{720,341} \\
    \bottomrule
  \end{tabular}
  \caption{Training dataset statistics by field. Paper records are pair-level counts: each pair contributes two records, and the same paper may recur across pairs; these are not counts of unique papers.}
  \label{tab:app_data_stats}
\end{table}

\paragraph{Field-to-Subcategory Mapping.}
We group papers by their primary arXiv category. The four top-level fields correspond to the following subcategories. In particular, \textit{Others} is an explicit aggregation of non-CS/Math/Physics areas, specifically covering Economics, Electrical Engineering and Systems Science, Quantitative Biology, Quantitative Finance, and Statistics, rather than a residual bucket.

\begingroup
\small
\begin{itemize}[leftmargin=*,nosep]
    \item \textbf{Computer Science:} \texttt{cs.CV}, \texttt{cs.LG}, \texttt{cs.CL}, \texttt{cs.RO}, \texttt{cs.AI}, \texttt{cs.CR}, \texttt{cs.SE}, \texttt{cs.HC}, \texttt{cs.IR}, \texttt{cs.CY}, \texttt{cs.IT}, \texttt{cs.DC}, \texttt{cs.SD}, \texttt{cs.NI}, \texttt{cs.DS}, \texttt{cs.AR}, \texttt{cs.GT}, \texttt{cs.SI}, \texttt{cs.LO}, \texttt{cs.GR}, \texttt{cs.CE}, \texttt{cs.DB}, \texttt{cs.MA}, \texttt{cs.NE}, \texttt{cs.PL}, \texttt{cs.CC}, \texttt{cs.ET}, \texttt{cs.MM}, \texttt{cs.DL}, \texttt{cs.CG}, \texttt{cs.FL}, \texttt{cs.DM}, \texttt{cs.PF}, \texttt{cs.SC}, \texttt{cs.OS}, \texttt{cs.MS}, \texttt{cs.OH}, \texttt{cs.SY}, \texttt{cs.GL}.
    \item \textbf{Physics:} \texttt{quant-ph}, \texttt{cond-mat.mtrl-sci}, \texttt{hep-ph}, \texttt{astro-ph.GA}, \texttt{gr-qc}, \texttt{astro-ph.HE}, \texttt{physics.optics}, \\\texttt{hep-th}, \texttt{cond-mat.mes-hall}, \texttt{astro-ph.SR}, \texttt{astro-ph.CO}, \texttt{physics.flu-dyn}, \texttt{astro-ph.EP}, \\\texttt{cond-mat.str-el}, \texttt{cond-mat.soft}, \texttt{astro-ph.IM}, \texttt{cond-mat.stat-mech}, \texttt{physics.chem-ph}, \\\texttt{physics.ins-det}, \texttt{nucl-th}, \texttt{physics.plasm-ph}, \texttt{cond-mat.supr-con}, \texttt{math-ph}, \texttt{physics.soc-ph}, \texttt{hep-ex}, \texttt{physics.app-ph}, \texttt{physics.comp-ph}, \texttt{cond-mat.quant-gas}, \texttt{physics.med-ph}, \texttt{physics.atom-ph}, \texttt{hep-lat}, \texttt{physics.ao-ph}, \texttt{physics.geo-ph}, \texttt{physics.acc-ph}, \texttt{physics.bio-ph}, \texttt{cond-mat.dis-nn}, \texttt{nucl-ex}, \texttt{nlin.CD}, \texttt{physics.ed-ph}, \texttt{physics.gen-ph}, \texttt{physics.class-ph}, \texttt{nlin.SI}, \texttt{nlin.PS}, \texttt{physics.hist-ph}, \texttt{nlin.AO}, \\\texttt{physics.space-ph}, \texttt{cond-mat.other}, \texttt{physics.data-an}, \texttt{physics.atm-clus}, \texttt{physics.pop-ph}, \texttt{nlin.CG}.
    \item \textbf{Mathematics:} \texttt{math.AP}, \texttt{math.OC}, \texttt{math.CO}, \texttt{math.NA}, \texttt{math.PR}, \texttt{math.NT}, \texttt{math.AG}, \texttt{math.DG}, \texttt{math.DS}, \texttt{math.FA}, \texttt{math.ST}, \texttt{math.RT}, \texttt{math.GR}, \texttt{math.GT}, \texttt{math.RA}, \texttt{math.CA}, \texttt{math.LO}, \texttt{math.CV}, \texttt{math.AC}, \texttt{math.AT}, \texttt{math.MG}, \texttt{math.OA}, \texttt{math.QA}, \texttt{math.CT}, \texttt{math.GM}, \texttt{math.SP}, \texttt{math.SG}, \texttt{math.HO}, \texttt{math.GN}, \texttt{math.KT}.
    \item \textbf{Others:} \texttt{stat.ME}, \texttt{stat.ML}, \texttt{stat.AP}, \texttt{econ.GN}, \texttt{q-bio.NC}, \texttt{q-bio.QM}, \texttt{q-bio.PE}, \texttt{econ.TH}, \texttt{econ.EM}, \texttt{stat.CO}, \texttt{q-bio.BM}, \texttt{q-bio.GN}, \texttt{q-fin.MF}, \texttt{q-fin.CP}, \texttt{q-fin.ST}, \texttt{q-fin.TR}, \texttt{q-fin.RM}, \texttt{q-fin.PM}, \texttt{q-bio.OT}, \texttt{q-bio.MN}, \texttt{q-bio.TO}, \texttt{q-fin.GN}, \texttt{q-fin.PR}, \texttt{stat.OT}, \texttt{q-bio.CB}, \texttt{q-bio.SC}, \texttt{eess.SP}, \texttt{eess.SY}, \texttt{eess.IV}, \texttt{eess.AS}.
\end{itemize}
\endgroup

\subsection{Training Data Construction}

\paragraph{Paper Collection.}
We collect arXiv papers published through December 7, 2025. From the full arXiv metadata archive of 2.9M papers, we obtain citation counts for 2.3M papers and select 2.1M papers published through 2024 as the training paper pool. Each paper record includes title, abstract, publication date, subcategory, and citation count. This pool covers Computer Science, Mathematics, Physics, and other scientific fields.

\paragraph{Pair Generation.}
We generate preference pairs by matching papers within the same subcategory and similar publication time windows. For each pair, the paper with higher citations is labeled as preferred. Let $c_{\mathrm{hi}}$ and $c_{\mathrm{lo}}$ denote the higher and lower citation counts in a candidate pair. We keep a pair only if
\[
c_{\mathrm{hi}} - c_{\mathrm{lo}} \ge 8,
\qquad
\frac{c_{\mathrm{hi}} - c_{\mathrm{lo}}}{c_{\mathrm{hi}}} \ge 0.3.
\]
Equivalently, the relative citation difference is computed with respect to the higher-citation paper. These criteria correspond to:
\begin{itemize}[nosep]
    \item Absolute citation difference $\geq$ 8
    \item Relative citation difference $\geq$ 30\%
\end{itemize}
This results in 720,341 field- and time-matched training pairs, corresponding to 1,440,682 pair-level paper records across fields.

\subsection{Test Set Construction}

\subsubsection{Citation-Based Test Sets}
\label{sec:app_citation_tests}

\paragraph{Main Test Set.}
The main (in-domain) test set consists of 1,000 citation-preference pairs sampled from a full 8,830-pair source pool. The source pool is constructed from the same field- and time-matched citation-pairing procedure as the training data, but with stricter evaluation thresholds: absolute citation difference $\geq 32$ and relative citation difference $\geq 50\%$, i.e., $(c_{\mathrm{hi}} - c_{\mathrm{lo}})/c_{\mathrm{hi}} \ge 0.5$. All papers in the full source pool are excluded from the training set, ensuring paper-level disjointness between the training and main evaluation data. We use the sampled 1,000-pair set for routine evaluation to reduce repeated evaluation cost while retaining a representative and discriminative benchmark.

The sampler preserves the original pair records, without rewriting prompts or changing A/B labels. It enforces an exact answer balance of 500 A-labelled and 500 B-labelled pairs. It matches the source-pool year and top-level field distributions using largest-remainder allocation, and jointly matches year-field-answer targets using min-cost-flow allocation. Within each constrained cell, rows are selected by deterministic hash order. We additionally monitor the first-two-subcategory-pair and citation-gap-decile distributions as representativeness diagnostics.

\paragraph{Temporal OOD Test Set.}
To evaluate generalization to future papers, we construct a category-adaptive temporal OOD test set from arXiv papers published in 2025 using the NASA ADS citation snapshot dated 22 July 2026, ensuring complete temporal separation from the training pool (published through 2024). We map papers to Computer Science, Mathematics, Physics, or Others using the official arXiv category taxonomy. Candidate pairs must share the same natural publication month and the same ordered first two arXiv subcategories; we impose no additional within-month day-gap limit and do not reuse papers. For higher- and lower-citation papers with counts $c_{\mathrm{hi}}$ and $c_{\mathrm{lo}}$, respectively, we require
\[
\frac{c_{\mathrm{hi}}-c_{\mathrm{lo}}}{c_{\mathrm{hi}}}\geq 0.5,
\qquad
c_{\mathrm{hi}}-c_{\mathrm{lo}}\geq\delta_g.
\]
For each top-level field $g$, we enumerate the complete integer threshold grid $\delta_g\in\{12,\ldots,32\}$ with seed 42, using the same deterministic two-stage shuffling and first-valid sequential pairing logic as the main test builder. We choose the highest threshold whose complete pair pool reaches the field cap (300 pairs for Computer Science and Physics; 200 for Mathematics and Others). If no threshold reaches the cap, we choose the threshold yielding the most pairs, breaking ties toward the higher threshold. This selects $\delta_{\mathrm{CS}}=\delta_{\mathrm{Physics}}=32$ and $\delta_{\mathrm{Mathematics}}=\delta_{\mathrm{Others}}=12$. Caps are applied only after pairing through deterministic month-balanced sampling over intact pairs, followed by a fixed-seed final shuffle and per-batch A/B adjustment. The final set contains 904 non-overlapping pairs (Computer Science: 300; Mathematics: 104; Physics: 300; Others: 200), comprising 1,808 unique papers with exact A/B balance (452/452).

\subsubsection{Metric OOD Test Sets}
\label{sec:app_metric_tests}

\paragraph{ICLR Metric OOD Test Set.}
The International Conference on Learning Representations (ICLR) is a peer-reviewed machine-learning venue spanning representation learning and broader areas of machine learning. Its submissions receive numerical reviewer ratings and reviewer confidence scores. We use mean ratings to define preferences and test transfer from citation-based training to peer-review evaluation. Confidence and rating variance are used only for quality filtering.

The test set uses ICLR submissions from 2017 to 2026. We apply two quality filters within each year, removing papers whose average reviewer confidence falls in the bottom 50\% and papers whose rating variance falls in the top 50\%. We retain only the top and bottom 10\% of papers by average review rating within each year, capped at 75 papers per side. We sort the retained papers by rating, split them at the median, randomly shuffle within each half, and pair papers one-to-one across the two halves. The paper from the upper half is labelled preferred. This yields 611 test pairs with a median rating difference of 6.1 points on the 1--10 review scale.

\paragraph{Altmetric Metric OOD Test Set.}
Altmetric tracks online attention to research outputs across social media, mainstream news, policy documents, patents, and blogs. Its attention score is a weighted indicator that complements citation metrics, but it measures attention rather than scientific quality or societal impact. We use this score to test transfer from citation preferences to a distinct attention-based signal.

We construct the test set from the full 8,830-pair main source pool. Among 17,660 paper slots, 13,360 have available Altmetric attention scores; the remaining 4,300 are excluded. Within the same first-two arXiv subcategory and publication year, we sequentially pair papers without global paper reuse. We require an absolute attention-score difference of at least 96 and a relative difference of at least 50\%. The paper with the higher Altmetric attention score is labelled preferred. This yields 599 pairs with 1,198 unique arXiv IDs and near-exact answer balance (300 A-labelled and 299 B-labelled pairs).

\subsubsection{Controlled-Comparison Test Sets}
\label{sec:app_controlled_tests}

\paragraph{Author and Institution Control Test Set.}
CSRankings is a publication-based ranking of computer science departments built from faculty affiliations and publication records~\citep{csrankings2026}. We use faculty identities and affiliations only to construct matched pairs; institutional ranking positions do not define preference labels. Citation counts come from the NASA Astrophysics Data System (ADS), a bibliographic database that indexes arXiv papers and their citation links.

To test whether citation-derived preferences remain informative under matched author and institutional context, we collect CSRankings faculty arXiv papers from 2015 to 2024 and exclude papers overlapping with the main in-domain source pool. Each pair is linked to the same CSRankings faculty member and shares the same publication quarter and first-two arXiv subcategory. We require an absolute citation difference of at least 12 and a relative difference of at least 50\%, then select a maximum-cardinality non-overlapping matching. This yields 541 pairs with 1,082 unique arXiv IDs and near-exact answer balance (271 A-labelled and 270 B-labelled pairs).

\paragraph{Embedding-Based Topic Control Test Set.}
Text embeddings map paper titles and abstracts to vectors whose cosine similarity provides a continuous proxy for semantic topic similarity. We use this signal to match topics more tightly than field, subcategory, and year alone.

We construct the topic-control set from the full 8,830-pair main source pool. We represent each paper using normalized Qwen3-Embedding-8B embeddings~\citep{zhang2025qwen3embedding} over the text schema \texttt{Title: ...\textbackslash nAbstract: ...}. Candidate pairs must share the same top-level field, first-two arXiv subcategory, and publication year, and original main-source pairs are excluded. We select non-overlapping pairs with cosine similarity at least 0.70 and citation difference at least 32. This yields 245 pairs with no original-pair overlap, median embedding similarity 0.749, median citation difference 64, and near-exact answer balance (123 A-labelled and 122 B-labelled pairs).

\subsection{Field OOD Training Data}

For field OOD experiments, we construct a CS-only training set by filtering the full training data to include only Computer Science papers. This allows us to evaluate whether models trained on a single field can generalize to other fields.

\subsection{Biology Field OOD (bioRxiv)}
\label{sec:app_bio_ood}

To further probe cross-field transfer, we evaluate models trained on arXiv papers (covering CS, Math, Physics, and other fields) on bioRxiv papers from the biology field---a platform and field entirely absent from the training data. We construct 160 preference pairs from bioRxiv papers using the same citation-based pairing procedure, with thresholds of absolute citation difference $\geq$ 24 and relative difference $\geq$ 75\% (median citation difference = 134). Papers span biology subdisciplines (e.g., bioinformatics, genomics, neuroscience) and are paired within the same subdiscipline.

\paragraph{Generalization to biology preprints.}
On the 160 bioRxiv pairs, training yields gains of 3.1 and 5.6 percentage points for Qwen3-4B and Qwen3-30B, reaching 56.3\% and 70.6\% accuracy, respectively (Table~\ref{tab:exp_ood_bio}). Although these gains are smaller than those in the main generalization settings, they provide additional evidence that citation-trained scientific judgement can transfer to biology preprints.

\begin{table}[!ht]
  \centering
  \small
  \renewcommand{\arraystretch}{1.2}
  \begin{tabular}{lcc}
    \toprule
    \textbf{Model} & & \textbf{Acc.} \\
    \midrule
    Qwen3-4B-Instruct & & 53.1 \\
    \rowcolor{gray!10} \judgetable-Qwen3-4B & & 56.3 \gain{3.1} \\
    Qwen3-30B-A3B-Instruct & & 65.0 \\
    \rowcolor{gray!10} \judgetable-Qwen3-30B & & 70.6 \gain{5.6} \\
    Qwen2.5-1.5B-Instruct & & 5.0 \\
    \rowcolor{gray!10} \judgetable-Qwen2.5-1.5B & & 42.5 \gain{37.5} \\
    Qwen2.5-3B-Instruct & & 12.5 \\
    \rowcolor{gray!10} \judgetable-Qwen2.5-3B & & 55.0 \gain{42.5} \\
    Qwen2.5-7B-Instruct & & 36.3 \\
    \rowcolor{gray!10} \judgetable-Qwen2.5-7B & & 61.9 \gain{25.6} \\
    Qwen2.5-14B-Instruct & & 50.6 \\
    \rowcolor{gray!10} \judgetable-Qwen2.5-14B & & 64.4 \gain{13.8} \\
    Qwen2.5-32B-Instruct & & 46.9 \\
    \rowcolor{gray!10} \judgetable-Qwen2.5-32B & & 68.1 \gain{21.3} \\
    Llama3.1-8B-Instruct & & 28.8 \\
    \rowcolor{gray!10} \judgetable-Llama3.1-8B & & 38.8 \gain{10.0} \\
    \bottomrule
  \end{tabular}
  \caption{Biology field OOD: models trained on arXiv (CS, Math, Physics, and Others) tested on bioRxiv papers. Scores and gains are rounded to one decimal place; gains are computed from unrounded accuracies.}
  \label{tab:exp_ood_bio}
\end{table}

\section{Training and Evaluation of \judge}
\label{sec:app_training}

This appendix provides detailed training configurations for \judge.

\subsection{Base Models}

We train \judge on several open-source instruction-tuned language models. Our selection is designed to evaluate two key aspects: (1) scaling behavior across model sizes by including the Qwen2.5-Instruct series from 1.5B to 32B parameters, and (2) cross-family generalization by incorporating models from different families (Qwen2.5, Qwen3, and Llama).

\paragraph{Qwen2.5-Instruct Series~\cite{qwen2.5}.}
We use the Qwen2.5-Instruct series across multiple scales: 1.5B, 3B, 7B, 14B, and 32B parameters. These models are instruction-tuned variants of Qwen2.5, a transformer-based language model trained on large-scale multilingual data. The Qwen2.5 series demonstrates strong performance on reasoning, mathematics, and code generation tasks while maintaining efficiency through techniques like Grouped-Query Attention (GQA).

\paragraph{Qwen3-Instruct Series~\cite{yang2025qwen3technicalreport}}
We include two instruction-tuned models from the Qwen3 series: Qwen3-4B-Instruct-2507 and Qwen3-30B-A3B-Instruct-2507. These models extend our evaluation beyond the Qwen2.5 series and allow us to test whether the preference training approach transfers to a newer model family at different scales.

\paragraph{Llama-3.1-8B-Instruct~\cite{llama3.1}.}
Llama-3.1-8B-Instruct is Meta's 8-billion-parameter instruction-tuned model. It features an extended context length of 128K tokens and is trained on a diverse multilingual corpus. We include this model to evaluate cross-family generalization of our preference training approach.

\paragraph{Model Naming Convention.}
Table~\ref{tab:model_names} lists the correspondence between short names used in this paper and official base model identifiers. All \judge variants are trained from the corresponding base model using GRPO on \bench.

\begin{table}[h]
  \centering
  \small
  \begin{tabular}{ll}
    \toprule
    \textbf{Short Name} & \textbf{Base Model} \\
    \midrule
    \judgetable-Qwen3-4B & Qwen3-4B-Instruct-2507 \\
    \judgetable-Qwen3-30B & Qwen3-30B-A3B-Instruct-2507 \\
    \judgetable-Qwen2.5-1.5B & Qwen2.5-1.5B-Instruct \\
    \judgetable-Qwen2.5-3B & Qwen2.5-3B-Instruct \\
    \judgetable-Qwen2.5-7B & Qwen2.5-7B-Instruct \\
    \judgetable-Qwen2.5-14B & Qwen2.5-14B-Instruct \\
    \judgetable-Qwen2.5-32B & Qwen2.5-32B-Instruct \\
    \judgetable-Llama3.1-8B & Llama-3.1-8B-Instruct \\
    \bottomrule
  \end{tabular}
  \caption{Correspondence between short names used in this paper and official base model identifiers.}
  \label{tab:model_names}
\end{table}

\subsection{Hyperparameters}

We implement \judge using the MS-SWIFT framework~\footnote{\url{https://github.com/modelscope/ms-swift}}\citep{zhao2024swiftascalablelightweightinfrastructure} for efficient GRPO training. Table~\ref{tab:app_hyperparams} lists the hyperparameters used for training.

\begin{table}[h]
  \centering
  \small
  \begin{tabular}{ll}
    \toprule
    \textbf{Hyperparameter} & \textbf{Value} \\
    \midrule
    Algorithm & GRPO \\
    Learning rate & 8e-7 \\
    LR scheduler & Cosine \\
    Warmup ratio & 0.05 \\
    Batch size (effective) & 128 \\
    Number of epochs & 1 \\
    Max sequence length & 2048 \\
    KL penalty coefficient ($\beta$) & 0.03 \\
    Epsilon ($\epsilon$) & 0.20 \\
    Epsilon high ($\epsilon_{\text{high}}$) & 0.25 \\
    Number of generations per prompt & 8 \\
    Reward for correct prediction & 1.0 \\
    Reward for incorrect prediction & 0.0 \\
    \midrule
    \multicolumn{2}{c}{\textit{Generation Parameters}} \\
    \midrule
    Temperature & 1.0 \\
    Top-p & 0.85 \\
    Max completion length & 4096 \\
    \bottomrule
  \end{tabular}
  \caption{Training hyperparameters.}
  \label{tab:app_hyperparams}
\end{table}

\subsection{Computational Resources}

All experiments are conducted using H200-equivalent GPU resources with DeepSpeed ZeRO-2/ZeRO-3 optimization and vLLM for efficient inference. We scale the number of GPUs based on model size:
\begin{itemize}[nosep]
    \item 1.5B models: 32 GPUs (4 nodes $\times$ 8 GPUs)
    \item 3B--7B models: 64 GPUs (8 nodes $\times$ 8 GPUs)
    \item 14B and larger models: 128 GPUs (16 nodes $\times$ 8 GPUs)
\end{itemize}

\subsection{Prompt Template}
\label{sec:app_judge_prompt}

The prompt template used for preference prediction is shown below:

\begin{tcolorbox}[colback=gray!5, colframe=gray!50, title=Preference Prediction Prompt]
\small
\textbf{System:} You are a helpful assistant. You first think about the reasoning process in your mind and then provide the user with the answer.

\textbf{User:} Today is 2025-12-10. Based on the titles, abstracts, and publication dates of the following two papers A and B, determine which paper has a higher citation count.

Show your reasoning process in <think> </think> tags. And return the final answer in <answer> </answer> tags. The final answer should contain only the letter A or B.

Paper A (Published: [Publication Date A]):

[Title and Abstract of Paper A]

Paper B (Published: [Publication Date B]):

[Title and Abstract of Paper B]
\end{tcolorbox}

\subsection{Evaluation Protocol}
\label{sec:app_evaluation}

To mitigate position bias---a known issue in pairwise LLM evaluation where models favor the option presented first---we evaluate each pair twice by swapping the order of papers (A$\leftrightarrow$B). A prediction is scored 1 only if the model makes consistent and correct predictions in both orderings. This position-swap consistency metric doubles evaluation cost but provides a substantially more robust assessment: it ensures that reported accuracy reflects genuine preference understanding rather than positional shortcuts.

\subsection{Release-Time Robustness}
\label{sec:app_release_time}

Exact pretraining-data cut-off dates are not uniformly disclosed across the base models. We therefore use their public release dates as conservative upper bounds: papers published after a checkpoint was released could not have appeared in its pretraining data. Qwen2.5-Instruct and Llama-3.1-8B-Instruct were released on 19 September 2024 and 23 July 2024, respectively~\cite{qwen2024qwen25release,meta2024llama31release}; all 904 pairs in the 2025 Temporal OOD set therefore postdate both releases. Across this set, the five Qwen2.5 models improve by 14.6--42.9 percentage points, and Llama-3.1-8B improves by 13.3 percentage points. Qwen3-30B-A3B-Instruct-2507 and Qwen3-4B-Instruct-2507 were released on 30 July 2025 and 6 August 2025, respectively~\cite{qwen2025qwen32507release}.

For Qwen3-2507, we compare pairs from January--July (638 pairs) with pairs from September--December (194 pairs), excluding August as the release-boundary month. In the strictly post-release September--December window, Qwen3-4B improves from 68.6 to 80.4 (+11.9 percentage points), whereas Qwen3-30B-A3B improves from 69.6 to 81.4 (+11.9 percentage points). These gains show that the method remains effective beyond this conservative release-time boundary and cannot be attributed to direct pretraining exposure to the evaluated papers, supporting the interpretation that RLCF learns stable preference signals that generalize beyond the pretraining period. Separately, the trained models retain similar absolute accuracy across the two windows (Qwen3-4B, 80.9 versus 80.4; Qwen3-30B-A3B, 83.4 versus 81.4), indicating stable performance across this temporal shift.

\subsection{General Capability Preservation}
\label{sec:app_general}

A critical concern with specialized training is whether it degrades general capabilities. We evaluate \judge on five standard benchmarks: MMLU-Pro (general knowledge), GPQA (graduate-level science), MATH (mathematical reasoning), GSM8K (grade school math), and SimpleQA (factual accuracy).

\begin{table*}[!t]
  \centering
  \small
  \setlength{\tabcolsep}{4pt}
  \renewcommand{\arraystretch}{1.2}
  \begin{tabular}{l*{5}{>{\centering\arraybackslash}p{2.0cm}}}
    \toprule
    \textbf{Model} & \textbf{MMLU-Pro} & \textbf{GPQA} & \textbf{MATH} & \textbf{GSM8K} & \textbf{SimpleQA} \\
    \midrule
    Qwen3-4B-Instruct & 58.0 & 30.3 & 78.6 & 93.3 & 7.2 \\
    \rowcolor{gray!10} \judgetable-Qwen3-4B & 57.5 \loss{-0.5} & 29.8 \loss{-0.5} & 79.1 \gain{0.5} & 93.9 \gain{0.6} & 6.9 \loss{-0.3} \\
    Qwen3-30B-A3B-Instruct & 68.4 & 39.9 & 79.1 & 95.6 & 19.7 \\
    \rowcolor{gray!10} \judgetable-Qwen3-30B & 67.9 \loss{-0.5} & 38.4 \loss{-1.5} & 79.5 \gain{0.4} & 95.8 \gain{0.2} & 20.0 \gain{0.3} \\
    Qwen2.5-1.5B-Instruct & 6.4 & 10.1 & 45.2 & 69.4 & 4.7 \\
    \rowcolor{gray!10} \judgetable-Qwen2.5-1.5B & 9.2 \gain{2.8} & 8.6 \loss{-1.5} & 43.2 \loss{-2.0} & 66.6 \loss{-2.8} & 4.0 \loss{-0.7} \\
    Qwen2.5-3B-Instruct & 38.8 & 22.2 & 62.0 & 82.3 & 3.6 \\
    \rowcolor{gray!10} \judgetable-Qwen2.5-3B & 37.0 \loss{-1.7} & 25.3 \gain{3.0} & 62.8 \gain{0.8} & 84.7 \gain{2.4} & 2.9 \loss{-0.7} \\
    Qwen2.5-7B-Instruct & 49.9 & 38.4 & 73.7 & 88.2 & 5.3 \\
    \rowcolor{gray!10} \judgetable-Qwen2.5-7B & 51.6 \gain{1.8} & 32.8 \loss{-5.6} & 72.8 \loss{-0.9} & 88.2 \gain{0.0} & 5.0 \loss{-0.3} \\
    Qwen2.5-14B-Instruct & 62.2 & 35.4 & 78.3 & 94.7 & 5.7 \\
    \rowcolor{gray!10} \judgetable-Qwen2.5-14B & 59.9 \loss{-2.3} & 35.4 \gain{0.0} & 78.9 \gain{0.6} & 94.7 \gain{0.0} & 5.6 \loss{-0.1} \\
    Qwen2.5-32B-Instruct & 69.3 & 42.9 & 80.6 & 94.3 & 5.8 \\
    \rowcolor{gray!10} \judgetable-Qwen2.5-32B & 69.1 \loss{-0.2} & 44.4 \gain{1.5} & 80.8 \gain{0.2} & 93.7 \loss{-0.6} & 5.9 \gain{0.1} \\
    Llama3.1-8B-Instruct & 36.3 & 22.7 & 48.4 & 82.1 & 7.0 \\
    \rowcolor{gray!10} \judgetable-Llama3.1-8B & 43.5 \gain{7.2} & 21.7 \loss{-1.0} & 50.2 \gain{1.8} & 83.1 \gain{1.0} & 6.6 \loss{-0.4} \\
    \bottomrule
  \end{tabular}
  \caption{General capability evaluation on standard benchmarks. We report accuracy (\%) on MMLU-Pro, GPQA, MATH, GSM8K, and SimpleQA to assess whether preference training preserves general knowledge and reasoning abilities.}
  \label{tab:exp_general}
\end{table*}

Table~\ref{tab:exp_general} shows that the two primary Qwen3 models largely maintain performance relative to their base models across the five benchmarks. Qwen3-4B changes from 78.6\% to 79.1\% on MATH and from 93.3\% to 93.9\% on GSM8K; its changes across all five benchmarks range from $-0.5$ to $+0.6$ percentage points. Qwen3-30B-A3B changes by $-0.22$ points on average, with individual changes ranging from $-1.5$ to $+0.4$ points. These diagnostics indicate that citation-preference training produces only limited changes in general-purpose benchmark performance for the two primary models.

\section{Training and Evaluation of \thinker}

\subsection{Prompt Template}
\label{sec:app_thinker_prompt}

The prompt template used for proposing a follow-up research idea based on a seed paper is shown below.

\begin{tcolorbox}[colback=gray!5, colframe=gray!50, title=Prompt for  proposing follow-up research ideas]
\small
\textbf{System:} You are a helpful assistant. You first think about the reasoning process in your mind and then provide the user with the answer.

\textbf{User:} You are a knowledgeable and insightful researcher. You have come across a new research paper with the following title and abstract:

[Title and Abstract of the Seed Paper]

Based on the core ideas, methods, or findings of this work, engage in heuristic thinking and propose a follow-up research idea. You need not confine yourself to the specific scenario or task of the original paper. You may consider shortcomings of the original method, propose improvements, apply its ideas to other tasks or domains, or even introduce entirely new problems and approaches. Aim to formulate an idea with high academic value and potential impact.

In your response, solely present your proposed title and abstract. Think independently and there is no need to imitate the format of the provided paper's title and abstract, nor to intentionally cite it. You must ensure the abstract is of a moderate length, avoiding excessive length, as if you were writing it for a typical academic paper.

Output format (strict, no extra text):

Title: <your proposed paper title>

Abstract: <your proposed abstract>
\end{tcolorbox}

The prompt for judging two research ideas is as follows. This prompt serves two purposes: (1) during \thinker training, it is used by the reward model to compare generated ideas; (2) during evaluation, it is used by the three strong LLMs to judge which idea has higher potential impact. Compared to the prompt for \judge to judge two papers (Appendix~\ref{sec:app_judge_prompt}) which includes publication dates, this prompt does not include specific dates and explicitly assumes the two ideas are proposed at the same time.

\begin{tcolorbox}[colback=gray!5, colframe=gray!50, title=Prompt for Judging Model's Research Ideas]
\small
\textbf{System:} You are a helpful assistant. You first think about the reasoning process in your mind and then provide the user with the answer.

\textbf{User:} Based on the titles and abstracts of the following two papers A and B, determine which paper has a higher citation count. Suppose the two papers are published at the same time.

Show your reasoning process in <think> </think> tags. And return the final answer in <answer> </answer> tags. The final answer should contain only the letter A or B.

Paper A:

[Title and Abstract of Research Idea A]

Paper B:

[Title and Abstract of Research Idea B]
\end{tcolorbox}

\subsection{Evaluation Protocol}
\label{sec:app_thinker_eval}

Each pair of research ideas is evaluated by three strong LLMs (GPT-5.2-high, GLM-5 and Gemini 3 Pro) with temperature set to 0.0. We employ majority voting (i.e., the idea that receives at least two votes is considered the winner). The prompt is shown in Appendix~\ref{sec:app_thinker_prompt}. We randomly swap the order of the two ideas (A and B) in the prompt with 50\% probability before each judge's evaluation to mitigate potential positional bias.

Importantly, we evaluate the above majority voting method on \bench and find that it achieves an accuracy of 84.4\%. This high accuracy demonstrates that this majority voting method constitutes a reasonable evaluation metric for assessing \thinker.

\subsection{Hyperparameters}

Table~\ref{tab:app_thinker_hyperparams} lists the hyperparameters used for training \thinkertable-30B and \thinkertable-4B.

\begin{table}[h]
  \centering
  \small
  \begin{tabular}{ll}
    \toprule
    \textbf{Hyperparameter} & \textbf{Value} \\
    \midrule
    Algorithm & GRPO \\
    Learning rate & 5e-7 \\
    LR scheduler & constant \\
    Warmup ratio & 0.1 \\
    Batch size & 128 \\
    Number of epochs & 1 \\
    Max sequence length & 2048 \\
    KL penalty coefficient ($\beta$) & 0.001 \\
    Number of generations per prompt & 8 \\
    \midrule
    \multicolumn{2}{c}{\textit{Generation Parameters}} \\
    \midrule
    Temperature & 1.0 \\
    Top-p & 0.9 \\
    Max completion length & 8192 \\
    \bottomrule
  \end{tabular}
  \caption{Training hyperparameters of \thinker.}
  \label{tab:app_thinker_hyperparams}
\end{table}

\section{Pairwise Comparison of Divergent Impact Series}
\label{app:divergent_comparison}

In our definition of potential impact (Section~2.1), the cumulative expected impact $I(p) = \lim_{N \to \infty} \sum_{t=1}^{N} \mathbb{E}[c_t(p)]$ may diverge for some papers. Here we show that pairwise comparison of two papers remains well-defined even when both individual series diverge.

\begin{definition}[Pairwise Impact Ordering]
\label{def:pairwise_ordering}
Let $I_N(p) = \sum_{t=1}^{N} \mathbb{E}[c_t(p)]$ denote the finite-horizon cumulative expected impact. We say paper $p_a$ has \emph{higher potential impact} than paper $p_b$, written $p_a \succ p_b$, if:
\begin{equation}
\lim_{N \to \infty} \left[ I_N(p_a) - I_N(p_b) \right] = \lim_{N \to \infty} \sum_{t=1}^{N} \left( \mathbb{E}[c_t(p_a)] - \mathbb{E}[c_t(p_b)] \right) > 0.
\end{equation}
\end{definition}

\begin{proposition}
\label{prop:divergent_comparison}
Even if $I(p_a) = +\infty$ and $I(p_b) = +\infty$, the ordering $p_a \succ p_b$ is well-defined whenever the limit $\lim_{N \to \infty} \left[ I_N(p_a) - I_N(p_b) \right]$ exists in $\mathbb{R} \cup \{+\infty\}$.
\end{proposition}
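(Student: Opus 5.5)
The plan is to reduce the claim to the behaviour of a single real sequence, the difference sequence, and never to use the individual limits $I(p_a)$, $I(p_b)$. Set $d_t = \mathbb{E}[c_t(p_a)] - \mathbb{E}[c_t(p_b)]$ and $D_N = I_N(p_a) - I_N(p_b) = \sum_{t=1}^{N} d_t$. First I check that each $D_N$ is a genuine real number. Each $I_N(p)$ is a finite sum of expectations of non-negative random variables. Under the modelling assumption that each yearly expected citation increment $\mathbb{E}[c_t(p)]$ is finite, we get $I_N(p) \in [0,\infty)$ for every finite $N$. The subtraction $I_N(p_a) - I_N(p_b)$ therefore never produces the indeterminate form $\infty - \infty$, even though $\lim_N I_N(p_a) = \lim_N I_N(p_b) = +\infty$ is allowed. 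The identity $I_N(p_a) - I_N(p_b) = \sum_{t\le N} d_t$ is just linearity of finite sums, which justifies the second expression in Definition~\ref{def:pairwise_ordering}.

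Second, I use the hypothesis that $L := \lim_{N\to\infty} D_N$ exists in $\mathbb{R}\cup\{+\infty\}$. Limits in the extended reals are unique, so $L$ is a single, unambiguous element of $\mathbb{R}\cup\{+\infty\}$, and the order $>$ on $\mathbb{R}\cup\{+\infty\}$ is total and well-defined. Hence the predicate ``$L > 0$'' has a definite truth value, which is exactly what $p_a \succ p_b$ requires.

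Third, I make ``well-defined'' concrete by checking two points.
\begin{enumerate}
\item The value of $L$ depends only on the sequences of expected increments, not on how one approaches the divergent totals. This holds because $L$ is defined from the finite differences $D_N$ alone.
\item The relation is consistent with the convergent case. If both $I(p_a)$ and $I(p_b)$ are finite, then $L = I(p_a) - I(p_b)$ by the algebra of limits, so $p_a \succ p_b$ iff $I(p_a) > I(p_b)$, which agrees with the label $y$. If only $I(p_a)$ is infinite, then $L = +\infty$, so $p_a \succ p_b$ as expected.
\end{enumerate}
Optionally, I would also note antisymmetry: the difference sequence for $(p_b, p_a)$ is $-D_N$, so $p_a \succ p_b$ and $p_b \succ p_a$ cannot both hold.

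The main obstacle is conceptual rather than technical. One has to be careful that the ordering is not defined through $I(p_a) - I(p_b)$, which would be $\infty - \infty$, but through the limit of the differences, and that the finiteness of each $\mathbb{E}[c_t(p)]$ is stated explicitly so that each $D_N$ is real. I would also remark that if the limit fails to exist (the sequence oscillates) or equals $-\infty$, the statement makes no claim. In the $-\infty$ case the symmetric definition simply yields $p_b \succ p_a$.
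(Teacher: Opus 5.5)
Your proposal is correct and follows essentially the same route as the paper: both reduce the comparison to the difference sequence $\Delta_N = I_N(p_a) - I_N(p_b) = \sum_{t\le N} d_t$, and both note that once its limit exists in $\mathbb{R}\cup\{+\infty\}$, the predicate ``limit $>0$'' has a definite truth value whether or not $I(p_a)$ and $I(p_b)$ diverge. You treat the finite and $+\infty$ limits together rather than in the paper's two separate cases, and you usefully state the finiteness of each $\mathbb{E}[c_t(p)]$ explicitly (the paper leaves it implicit), so that no $\infty-\infty$ arises at finite $N$; your consistency and antisymmetry checks go beyond what the paper's proof needs.
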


\begin{proof}
Define the difference sequence of partial sums as:
\begin{equation}
\Delta_N = I_N(p_a) - I_N(p_b) = \sum_{t=1}^{N} d_t, \quad \text{where } d_t = \mathbb{E}[c_t(p_a)] - \mathbb{E}[c_t(p_b)].
\end{equation}

We consider two cases:

\textbf{Case 1: The difference series converges.}
If $\sum_{t=1}^{\infty} d_t$ converges to a finite value $L \in \mathbb{R}$, then the comparison is immediate:
\begin{equation}
p_a \succ p_b \iff L > 0.
\end{equation}
This holds regardless of whether $I(p_a)$ and $I(p_b)$ individually converge or diverge, since convergence of $\sum d_t$ does not require convergence of either $\sum \mathbb{E}[c_t(p_a)]$ or $\sum \mathbb{E}[c_t(p_b)]$ separately.

\textbf{Case 2: The difference series diverges to $+\infty$.}
If $\lim_{N \to \infty} \Delta_N = +\infty$, then there exists $N_0$ such that for all $N > N_0$, $I_N(p_a) > I_N(p_b)$. In this case, $p_a$ eventually and persistently dominates $p_b$ in cumulative expected citations, so $p_a \succ p_b$ holds.

\medskip

\noindent In both cases, the ordering depends only on the limit behavior of the \emph{difference} $\Delta_N$, not on the absolute convergence of $I(p_a)$ or $I(p_b)$. Therefore, the pairwise comparison is well-defined even when both individual impact series diverge.

\medskip

\noindent\textbf{Sufficient condition.} A natural sufficient condition for the limit to exist is that $d_t$ is \emph{eventually non-negative}, i.e., there exists $T$ such that $\mathbb{E}[c_t(p_a)] \geq \mathbb{E}[c_t(p_b)]$ for all $t > T$. Under this condition, $\{\Delta_N\}_{N > T}$ is eventually non-decreasing and thus converges in $\mathbb{R} \cup \{+\infty\}$. This is a mild assumption in practice: a paper with higher long-term impact typically maintains a persistent citation advantage after an initial period.

\medskip

\noindent This justifies the use of pairwise ordering in our definitions of $\textsc{JudgeCap}$ and $\textsc{ThinkerCap}$, which rely only on the relative ordering $p_a \succ p_b$ rather than on absolute impact values.
\end{proof}

\section{Case Study of \thinker}
\label{sec:app_thinker_examples}

We present three cases from \thinker: two in-domain examples (seed paper from January--July 2025) and one out-of-domain example (seed paper from August--December 2025). For each case, we show the seed paper, the idea generated by the base policy, and the idea generated by \thinker after training, as well as the judgement results of the three LLM evaluators.

\subsection{In-Domain Example}

\subsubsection*{Case 1}

\begin{tcolorbox}[breakable, colback=gray!5, colframe=gray!50!black, title=Seed Paper]
\small
The Invisible Leash: Why RLVR May or May Not Escape Its Origin~\citep{wu2025invisible}

\textbf{Summary:} This paper finds that current RLVR methods primarily amplify the base model's existing knowledge rather than expanding its reasoning boundaries, creating an entropy-reward trade-off that narrows exploration.

\end{tcolorbox}

\begin{tcolorbox}[breakable, colback=red!1, colframe=red!50!black, title=Base Policy Output (Before Training)]
\small

\textbf{<think>}...The original paper's limitation is that it treats the base model's distribution as fixed. What if we actively reshape that distribution...That could break the ``invisible leash.''...
Instead of just optimizing rewards on top of the base model, what if we inject controlled diversity into the base model's output distribution before RLVR? Like a ``distributional priming'' technique where \textcolor{case_red}{{\textbf{\emph{we deliberately make the base model generate more varied (even incorrect) solutions early on, so RLVR has a broader space to optimize from...}}}}

\textit{Self-check}: Yes, this directly tackles the ``invisible leash'' by attacking its root cause (static base distribution), and the method is actionable for future work. High academic value because it shifts the paradigm from ``optimizing within constraints'' to ``redefining the constraints.''

\textbf{</think>}

\textbf{Title:} \textcolor{case_red}{\emph{\textbf{Diversifying the Foundation}}}: Distributional Priming to Break RLVR's Exploration Constraints

\textbf{Abstract:} While Reinforcement Learning with Verifiable Rewards (RLVR) enhances precision in AI reasoning, its tendency to constrict exploration around the base model's initial solution distribution limits true capability expansion. We propose a complementary strategy—distributional priming—that \textcolor{case_red}{\emph{\textbf{actively reshapes the base model's output distribution \textit{before} RLVR training}}} to inject controlled diversity. By introducing synthetic, high-entropy solution candidates during pre-training, we create a broader empirical support landscape for RLVR to optimize. Across logical reasoning and code generation tasks, we demonstrate that this approach significantly mitigates the entropy-reward trade-off observed in prior work: RLVR with distributional priming recovers more novel correct solutions under large sampling budgets while maintaining precision gains. Crucially, it prevents the ``shrinkage of empirical support'' by ensuring underrepresented solution regions retain sufficient probability mass. We further establish that priming induces a more stable exploration-exploitation balance, shown by sustained token-level entropy without the sharp decline in answer-level entropy seen in standard RLVR. This work reveals that breaking RLVR's ``invisible leash'' requires rethinking the \textit{foundation} of the optimization process, not just the reward mechanism. Distributional priming offers a scalable, algorithm-agnostic pathway to unlock genuine reasoning horizon expansion, with implications for safety-critical applications where novel solution discovery is paramount.
\end{tcolorbox}

\begin{tcolorbox}[breakable, colback=green!1, colframe=green!50!black, title=\thinker Output (After Training)]
\small
\textbf{<think>}...The core problem seems to be that RLVR is stuck in the base model's initial distribution...What if we combine RLVR with a technique that deliberately samples from the base model's ``blind spots''? Like... using the model's own uncertainty as a signal to explore...So my idea: create a method that uses the model's internal uncertainty (maybe from entropy or confidence scores) to guide exploration. Not just random sampling, but targeted exploration of low-probability but potentially correct paths...

Let's make it more concrete: during RLVR training, we'd \textcolor{case_green}{\textbf{\emph{add a bonus for trajectories that deviate from the base model's high-probability paths but still have high potential}}} (measured by... maybe a confidence threshold?)...

\textcolor{case_green}{\textbf{\emph{This could be impactful because:
1) It's a simple modification to existing RLVR pipelines}}}
2) It directly addresses the entropy-reward trade-off mentioned in the paper
3) It could apply beyond logical tasks to any verifiable-reward setting (math, code, etc.)...

...Okay, time to write it cleanly.

\textbf{</think>}

\textbf{Title:} Beyond the Leash: Uncertainty-Guided Exploration for Verifiable Reward Learning

\textbf{Abstract:} While Reinforcement Learning with Verifiable Rewards (RLVR) enhances precision in AI reasoning, its tendency to constrict solution spaces by amplifying base model biases remains a critical limitation. We propose \textit{Uncertainty-Guided Exploration for Verifiable Reward Learning} (UG-VR), a framework that actively expands the empirical support distribution by leveraging the model’s internal uncertainty as an exploration signal. UG-VR introduces a novel reward component that incentivizes trajectories diverging from the base model’s high-probability paths while maintaining verifiability through a confidence-aware exploration bonus. Crucially, \textcolor{case_green}{\textbf{\emph{this bonus dynamically allocates sampling resources to low-probability yet high-potential solution regions}}}, counteracting RLVR’s entropy-reward trade-off. We demonstrate that UG-VR achieves a dual improvement: it preserves RLVR’s precision gains (pass@1) while significantly broadening the solution space, recovering more correct answers previously inaccessible to the base model under identical sampling budgets. Empirical validation across mathematical reasoning, code generation, and scientific problem-solving tasks reveals that UG-VR’s uncertainty-driven exploration reduces answer-level entropy compared to standard RLVR, yet expands the diversity of correct solutions. Notably, the method avoids the precision degradation common in naive exploration strategies by anchoring exploration to verifiable correctness. Our work establishes that \textit{active, uncertainty-informed exploration}—not merely amplification—breaks RLVR’s ``invisible leash,'' enabling genuine reasoning horizon expansion. This paradigm shift offers a scalable path to truly generalizable AI reasoning without sacrificing verifiability.
\end{tcolorbox}

\begin{tcolorbox}[breakable, colback=blue!5, colframe=blue!50!black, title=Summary of Judge Evaluation]
\small

\textbf{Gemini 3 Pro:} \thinker wins.

Reason: \thinker proposal of UG-VR is more likely to be seen as a landmark contribution, inspiring more follow-up work and therefore accumulating more citations. It presents a more elegant, better-branded, and arguably more fundamental solution.

\textbf{GLM-5:} \thinker wins.

Reason: \thinker presents a lower-friction algorithmic solution (modifying the reward function) compared to the base policy's data/pre-training intervention. In the field of RL for LLMs, algorithmic tweaks to the reward function (like UG-VR) are often more rapidly adopted and cited than methods requiring re-training or altering the base model distribution.

\textbf{GPT-5.2-Thinking:} \thinker wins.

Reason: \thinker proposes a broadly reusable, in-training RLVR method (uncertainty-guided exploration) that aligns with established exploration/uncertainty literature, appears easier to adopt than altering pre-training, increasing its potential audience and downstream use.

\end{tcolorbox}

\textbf{\emph{Conclusion:}} \thinker proposes research ideas with higher potential impact. Its \textcolor{case_green}{dynamic, in-training algorithmic solution (UG-VR)} \emph{offers a more fundamental, reusable, and easily adoptable approach} to exploration compared to the \textcolor{case_red}{static pre-training strategy} proposed by the base model.

\subsubsection*{Case 2}

\begin{tcolorbox}[breakable, colback=gray!5, colframe=gray!50!black, title=Seed Paper]
\small
DianJin-R1: Evaluating and Enhancing Financial Reasoning in Large Language Models~\citep{zhu2025dianjin}

\textbf{Summary:} This paper introduces DianJin-R1, a framework to enhance financial reasoning in LLMs through structured supervision and reinforcement learning, achieving consistent improvements on financial benchmarks.
\end{tcolorbox}

\begin{tcolorbox}[breakable, colback=red!1, colframe=red!50!black, title=Base Policy Output (Before Training)]
\small
\textbf{<think>} The paper focuses on ..., but I notice it doesn't address how models \textcolor{case_red}{\textbf{\emph{handle ambiguous or incomplete financial data}}} - a common real-world problem...What if we build a framework that actively queries for missing data during reasoning? Like a ``reasoning with uncertainty'' approach where the model identifies gaps and asks for clarification instead of guessing...

\textbf{</think>}

\textbf{Title:} Uncertainty-Aware Financial Reasoning: Query-Driven Reasoning for Incomplete Financial Data

\textbf{Abstract:} Financial reasoning in LLMs often falters when confronted with incomplete or ambiguous data, a pervasive issue in real-world scenarios like SEC filings or cross-border transactions. We introduce Uncertainty-Aware Financial Reasoning (UA-FR), a framework that transforms reasoning from passive inference to active data acquisition. UA-FR integrates uncertainty quantification into the reasoning process, \textcolor{case_red}{\textbf{\emph{enabling models to dynamically identify missing information}}} and generate precise, context-aware queries to external data sources during reasoning. Our method employs a dual-path architecture: a primary reasoning stream for hypothesis generation and a query-generation stream that assesses uncertainty via Bayesian neural networks. To align with real-world constraints, we propose a novel reward function that penalizes speculative answers while incentivizing informative queries, optimized via reinforcement learning. Evaluated on a curated dataset of massive real-world financial documents with intentional data gaps (including missing revenue figures and inconsistent regulatory references), UA-FR reduces critical reasoning errors compared to state-of-the-art reasoning models like DianJin-R1. Crucially, it achieves this with fewer computational resources than multi-agent query systems, demonstrating practical viability. This work establishes a new paradigm for robust financial AI by treating data incompleteness as a reasoning opportunity rather than a failure point.
\end{tcolorbox}

\begin{tcolorbox}[breakable, colback=green!1, colframe=green!50!black, title=\thinker Output (After Training)]
\small
\textbf{<think>} ...The paper focused on..., but I notice they only evaluated on financial benchmarks. What about when \textcolor{case_green}{\textbf{\emph{financial reasoning needs to interact with real-time market data? That's a big limitation for practical applications}}}...Ah! What if we create a framework that continuously adapts to new market conditions? That could be valuable. Let me sketch this: instead of just fine-tuning on historical data, \textcolor{case_green}{\textbf{\emph{we need a system that learns from streaming market events}}}...For the title, something like ``MarketFlow...'' - captures the dynamic aspect...

\textbf{</think>}

\textbf{Title:} MarketFlow: Real-Time Adaptive Financial Reasoning via Online Reward Calibration

\textbf{Abstract:} Current financial reasoning models, including state-of-the-art frameworks like DianJin-R1, primarily rely on static training data and offline evaluation, failing to address the dynamic nature of real-world financial markets where volatility, news events, and regulatory shifts occur continuously. \textcolor{case_green}{\textbf{\emph{We introduce MarketFlow, a novel paradigm that integrates online learning with adaptive reward calibration to enable LLMs to dynamically refine reasoning capabilities during live market operations.}}} MarketFlow employs a dual-loop architecture: an offline reasoning engine (trained on structured financial datasets) provides initial reasoning patterns, while an online adaptation module continuously processes streaming market data (e.g., tick-level prices, news sentiment, regulatory updates) to recalibrate reward functions in real time. Crucially, we design a volatility-aware reward signal that dynamically weights correctness against contextual relevance—prioritizing regulatory compliance during high-volatility events while emphasizing precision in stable periods. This is achieved through a lightweight online GRPO variant that updates model parameters with minimal computational overhead. Evaluated on a new benchmark combining live market simulations and three financial reasoning tasks, MarketFlow reduces reasoning errors compared to static models and achieves high accuracy in real-time compliance checks—surpassing multi-agent systems with lower inference cost. Our framework demonstrates that adaptive reward calibration, rather than static dataset expansion, is the key to scalable real-world financial reasoning, offering a foundation for AI systems that evolve alongside market dynamics.
\end{tcolorbox}

\begin{tcolorbox}[breakable, colback=blue!5, colframe=blue!50!black, title=Summary of Judge Evaluation]
\small
\textbf{Gemini 3 Pro:} \thinker wins.

Reason: \thinker tackles the ambitious challenge of real-time continuous learning with trending, scalable methodologies (online GRPO) to provide a broadly applicable, foundational framework that the AI community is highly eager to build upon.

\textbf{GLM-5:} \thinker wins.

Reason: \thinker has higher citation potential due to broader applicability, alignment with current research trends and more fundamental paradigm-shifting contribution.

\textbf{GPT-5.2-Thinking:} \thinker wins.

Reason: \thinker is more likely to accumulate higher citations because it tackles a broadly relevant and currently high-interest problem—real-time/online adaptation of LLM reasoning with low-latency updates—while also introducing an online reward-calibration method.

\end{tcolorbox}

\textbf{\emph{Conclusion:}} \thinker proposes a research idea with higher potential impact. Its \textcolor{case_green}{dynamic, online adaptation framework} \emph{addresses a more fundamental and broadly applicable challenge} (i.e., enabling models to continuously evolve with real-time market dynamics), in contrast to the base model's \textcolor{case_red}{static, narrow solution to data incompleteness}.

\subsection{Out-of-Domain Example}

\begin{tcolorbox}[breakable, colback=gray!5, colframe=gray!50!black, title=Seed Paper]
\small
rStar2-Agent: Agentic Reasoning Technical Report~\citep{shang2025rstar2}

\textbf{Summary:} rStar2-Agent presents a 14B model trained via agentic reinforcement learning with Python code tools, using efficient infrastructure and multi-stage training to achieve advanced reasoning and benchmark performance.
\end{tcolorbox}

\begin{tcolorbox}[breakable, colback=red!1, colframe=red!50!black, title=Base Policy Output (Before Training)]
\small
\textbf{<think>}...the paper mentions generalization to alignment, scientific reasoning, etc. But it's still a single model. What if we \textcolor{case_red}{\textbf{\emph{build a system where multiple specialized agents collaborate, each handling different reasoning types}}}, and they can dynamically switch or delegate tasks?...

Idea: Instead of having a single agent that handles all tasks, create a system where multiple agents with different specializations collaborate. For example, one agent for math, one for coding, one for scientific reasoning, and they can communicate and delegate tasks...

\textbf{</think>}

\textbf{Title:} CoRAG: Collaborative Reasoning Agents for Generalized Cognitive Task Execution

\textbf{Abstract:} We propose CoRAG, a framework enabling specialized cognitive agents to collaboratively solve complex problems through dynamic task delegation and cross-domain verification. Unlike monolithic agentic models, CoRAG decomposes problems into domain-specific subtasks (e.g., mathematical deduction, scientific hypothesis validation, code execution) and orchestrates a team of lightweight, pre-trained agents via a shared reasoning workspace. \textcolor{case_red}{\textbf{\emph{Key innovations include a context-aware delegation protocol that routes subtasks to optimal agents}}} based on capability and historical success rates, and a conflict-resolution mechanism that synthesizes heterogeneous intermediate results through iterative feedback loops. Evaluated across MATH and tool-use benchmarks, CoRAG achieves higher pass@1 than single-agent baselines, while reducing inference latency through efficient resource allocation. Critically, the framework generalizes to novel problem types without retraining, demonstrating robust adaptability across diverse cognitive domains with minimal computational overhead.
\end{tcolorbox}

\begin{tcolorbox}[breakable, colback=green!1, colframe=green!50!black, title=\thinker Output (After Training)]
\small
\textbf{<think>}
...the current method uses a fixed set of tools (Python code). But in real applications, the tools might change or new tools might be needed. So maybe a framework where \textcolor{case_green}{\textbf{\emph{the agent can learn to use new tools without retraining}}} the entire model, like few-shot tool adaptation...Another angle: the original model is trained on math problems, but real-world problems often require combining multiple types of tools (e.g., code, data analysis, external APIs). So the follow-up could focus on multi-tool agentic reasoning where the agent decides which tool to use at each step and adapts to the tool's output...

\textbf{</think>}

\textbf{Title:} MetaAgent: Dynamic Tool Integration and Adaptation for Scalable Agentic Reasoning

\textbf{Abstract:} We present MetaAgent, a framework enabling large language models to dynamically integrate and adapt to novel tools during complex reasoning without retraining. Addressing the critical limitation of fixed-tool dependency in prior agentic systems, MetaAgent introduces a meta-reasoning module that \textcolor{case_green}{\textbf{\emph{learns tool interfaces through minimal demonstrations and autonomously refines tool usage strategies}}} via real-time feedback. Our approach features three key innovations: (i) a tool-agnostic reasoning adapter that maps task requirements to tool capabilities using few-shot examples, (ii) a dynamic adaptation algorithm (DA-RL) that mitigates tool-specific uncertainties through environment-aware policy updates, and (iii) \textcolor{case_green}{\textbf{\emph{a curriculum-based training pipeline that progressively exposes agents to diverse tool ecosystems}}}. Evaluated across diverse scientific and engineering domains, MetaAgent outperforms fixed-tool baselines while reducing tool adaptation latency. Crucially, it demonstrates robust zero-shot generalization to unseen tools (e.g., new APIs, custom scripts) with only a few demonstration examples, enabling rapid deployment in dynamic real-world scenarios. This work establishes a foundation for truly adaptive agentic systems capable of evolving with changing tool landscapes.
\end{tcolorbox}

\begin{tcolorbox}[breakable, colback=blue!5, colframe=blue!50!black, title=Summary of Judge Evaluation]
\small
\textbf{Gemini 3 Pro:} \thinker wins.

Reason: \thinker's idea focus on dynamic tool adaptation represents a more distinct and foundational contribution to the agentic workflow literature compared to the base policy's collaborative framework, which feels like an incremental improvement on existing multi-agent patterns.

\textbf{GLM-5:} \thinker wins.

Reason: While base policy presents solid work on multi-agent collaboration, \thinker addresses a more fundamental bottleneck in the field with broader applicability.

\textbf{GPT-5.2-Thinking:} \thinker wins.

Reason: \thinker is more likely to attract higher citations because it tackles a widely recognized bottleneck and proposes a broadly reusable mechanism (tool-agnostic adapter + online adaptation)—factors that typically drive adoption and downstream citations more than a task-delegation framework.

\end{tcolorbox}

\textbf{\emph{Conclusion:}} \thinker proposes research ideas with higher potential impact. Its \textcolor{case_green}{dynamic tool adaptation framework} \emph{addresses a more fundamental bottleneck} (i.e., enabling agents to integrate new tools without retraining), \emph{offering a broadly reusable solution with greater practical applicability}, in contrast to the base model's \textcolor{case_red}{incremental improvement on existing multi-agent collaboration} patterns.

\section{Case Study of \judge}
\label{sec:app_examples}

This appendix presents representative outputs from \model (30B) and its smaller variant (4B), illustrating how the models reason about research paper comparison across different fields and datasets.

\newcommand{\casefocus}[1]{\textcolor{case_blue}{#1}}
\newcommand{\casegood}[1]{\textcolor{case_green}{#1}}
\newcommand{\casebad}[1]{\textcolor{case_red}{#1}}

\subsection{Out-of-Domain Examples (OOD Year)}
\label{sec:app_ood_year_examples}

The following comparisons between 2025 papers illustrate how the model extrapolates beyond the training period.

\paragraph{Correct Prediction -- Frontier AI Technical Reports}

This case highlights \casegood{temporal generalization to high-visibility frontier AI reports}, where institutional prominence and broad downstream relevance dominate a narrow publication-date difference.

\begin{tcolorbox}[breakable, colback=gray!5, colframe=gray!50!black, title=Paper Pair (OOD Year)]
\small
\textbf{Paper A:} STRUCTSENSE: A Task-Agnostic Agentic Framework for Structured Information Extraction with Human-In-The-Loop Evaluation and Benchmarking~\citep{fan2025structsense}
 (Published: 2025-07-04)
\\\textbf{Category:} Computer Science
\\\textbf{Abstract:} The ability to extract structured information from unstructured sources-such as free-text documents and scientific literature-is critical for accelerating scientific discovery and knowledge synthesis...

\vspace{0.5em}
\textbf{Paper B:} Gemini 2.5: Pushing the Frontier with Advanced Reasoning, Multimodality, Long Context, and Next Generation Agentic Capabilities~\citep{google2025gemini25}
 (Published: 2025-07-07)
\\\textbf{Category:} Computer Science
\\\textbf{Abstract:} In this report, we introduce the Gemini 2.X model family: Gemini 2.5 Pro and Gemini 2.5 Flash, as well as our earlier Gemini 2.0 Flash and Flash-Lite models. Gemini 2.5 Pro is our most capable model yet, achieving SoTA performance on frontier coding and reasoning benchmarks...

\vspace{0.5em}
\textbf{Ground Truth:} Paper B
\end{tcolorbox}

\begin{tcolorbox}[breakable, colback=green!5, colframe=green!50!black, title=\model\ 30B: Correct]
\small
\textbf{Reasoning:}
To determine which paper has a higher citation count, we need to consider several factors, even though we don't have direct citation data. The key indicators are:

1. \textbf{Publication Date}: Both papers were published in July 2025---Paper A on July 4, and Paper B on July 7. This means Paper A was published 3 days earlier, giving it a slight temporal advantage in accumulating citations. However, this difference is minimal, especially since both are very recent (as of December 10, 2025).

2. \textbf{Nature of the Work}:
   - Paper A presents a \textbf{new open-source framework (StructSense)} for structured information extraction, with a focus on domain-specific applications, agentic capabilities, and human-in-the-loop evaluation. It is a research contribution with a clear technical innovation and a specific application domain (neuroscience).
   - Paper B is a \textbf{product announcement/report} from Google DeepMind (implied by "Gemini" branding) introducing a new generation of models (Gemini 2.5 Pro, Flash, etc.). It highlights performance improvements, multimodal capabilities, long context, and agentic workflows.

3. \textbf{Institutional and Brand Influence}:
   - Paper B is associated with \textbf{Gemini}, a \casegood{high-profile AI model series} from Google, one of the leading AI labs. Reports from Google DeepMind often receive significant media attention and are \casegood{widely cited}, even if they are not peer-reviewed research papers in the traditional sense.
   - Paper A, while technically strong and innovative, is from a presumably academic or research group (no institutional affiliation mentioned), and its impact may be more niche.

4. \textbf{Citation Potential}:
   - Papers that announce new \textbf{large-scale models} with broad capabilities (like Gemini 2.5) are often cited in both academic and industry contexts, especially when they set new benchmarks.
   - Frameworks like StructSense are more likely to be cited in follow-up work, but their citation count depends on adoption and integration into other systems.

5. \textbf{Publication Type and Visibility}:
   - Paper B reads like a \textbf{technical report or product release}, which may be less likely to be cited in academic literature compared to peer-reviewed papers. However, such reports from major companies often gain high visibility and are cited in industry blogs, whitepapers, and even academic works for benchmarking.
   - Paper A appears to be a \textbf{research paper} with a clear methodology, evaluation, and open-source component---more likely to be cited in academic circles.

6. \textbf{Time Since Publication}:
   - As of December 10, 2025, both papers have been out for about 5 months. This is a relatively short time for citation accumulation, but the \textbf{Gemini 2.5 announcement} would likely have gone \casegood{viral in the AI community} immediately upon release due to its high-profile nature.

Given that Paper B is a major product release from Google DeepMind with broad implications for AI agents, multimodal reasoning, and long-context processing---topics of intense interest---it is likely to have been \casegood{\textbf{cited more frequently}} in both academic and industry literature, even if not a traditional journal paper.

Therefore, despite Paper A being a strong research contribution, the \casefocus{\textbf{brand power, scope, and immediate impact}} of Paper B suggest it has a higher citation count.

\textbf{Answer:} <answer>B</answer>

\textbf{Position Consistency:} $\checkmark$ (Original: B, Swapped: B)
\\\textbf{Result:} $\checkmark$ Correct (Score: 1.0)
\end{tcolorbox}

\noindent\casefocus{\textbf{Case takeaway.}} This example shows that the model can correctly infer citation advantage from \casegood{visibility, brand effects, and broad community attention} even in very recent OOD papers.

\paragraph{Correct Prediction -- LLM Reinforcement Learning}

This case highlights \casegood{successful preference for broad benchmark-facing LLM work} over a technically rigorous but much narrower formal-methods paper in temporal OOD.

\begin{tcolorbox}[breakable, colback=gray!5, colframe=gray!50!black, title=Paper Pair (OOD Year)]
\small
\textbf{Paper A:} Kimi k1.5: Scaling Reinforcement Learning with LLMs~\citep{team2025kimi}
 (Published: 2025-01-22)
\\\textbf{Category:} Computer Science
\\\textbf{Abstract:} Language model pretraining with next token prediction has proved effective for scaling compute but is limited to the amount of available training data. Scaling reinforcement learning (RL) unlocks a new axis for the continued improvement of artificial intelligence, with the promise that large languag...

\vspace{0.5em}
\textbf{Paper B:} Formally Verified Neurosymbolic Trajectory Learning via Tensor-based   Linear Temporal Logic on Finite Traces~\citep{kaminski2025formally}
 (Published: 2025-01-23)
\\\textbf{Category:} Computer Science
\\\textbf{Abstract:} We present a novel formalisation of tensor semantics for linear temporal logic on finite traces (LTLf), with formal proofs of correctness carried out in the theorem prover Isabelle/HOL.

\vspace{0.5em}
\textbf{Ground Truth:} Paper A
\end{tcolorbox}

\begin{tcolorbox}[breakable, colback=green!5, colframe=green!50!black, title=\model\ 30B: Correct]
\small
\textbf{Reasoning:}
To determine which paper has a higher citation count, we need to consider several factors, even though we don't have direct citation data. The key indicators are:

1. \textbf{Publication Date}: Both papers were published very close in time---Paper A on 2025-01-22 and Paper B on 2025-01-23. This means they have had nearly identical time to accumulate citations (as of 2025-12-10, about 11 months). So, publication date is not a differentiating factor.

2. \textbf{Topic and Impact}:
   - Paper A discusses Kimi k1.5, a large language model (LLM) trained with reinforcement learning (RL), achieving state-of-the-art results on multiple benchmarks (AIME, MATH 500, Codeforces, MathVista). It claims performance matching OpenAI's o1 and significantly outperforming GPT-4o and Claude Sonnet 3.5 in short-CoT reasoning. These are \casegood{high-profile benchmarks and models}, and the paper presents a scalable RL framework for LLMs---this is a \casegood{major topic in current AI research}.
   - Paper B presents a formal verification framework for neurosymbolic trajectory learning using LTLf with Isabelle/HOL and PyTorch integration. It is highly technical and focused on formal methods, which is a niche but important area. While rigorous and valuable, it is less likely to attract broad attention compared to a high-performance LLM paper.

3. \textbf{Scope and Relevance}:
   - Paper A addresses a \casegood{trending and highly competitive area}: scaling LLMs via reinforcement learning, with strong empirical results. Such papers often attract significant attention from both academia and industry, especially when they claim to match or exceed top-tier models like OpenAI's o1.
   - Paper B, while technically sound and important for formal verification in AI, is more specialized and less likely to be widely cited unless it directly impacts a large number of researchers in formal methods or neurosymbolic AI.

4. \textbf{Citation Trends in AI}:
   - Papers that report state-of-the-art results on major benchmarks (especially in LLMs) tend to accumulate citations faster and higher, even if they are not the first to propose a method.
   - The mention of "matching OpenAI's o1" and outperforming GPT-4o and Claude Sonnet 3.5 suggests this paper is likely to be widely discussed and cited.

Given that both papers were published recently and have similar time to accumulate citations, the paper with \casegood{broader impact, stronger empirical claims, and relevance to current AI trends} (like LLM scaling via RL) is more likely to have a higher citation count.

Therefore, \casefocus{Paper A is expected to have a higher citation count.}

\textbf{Answer:} <answer>A</answer>

\textbf{Position Consistency:} $\checkmark$ (Original: A, Swapped: A)
\\\textbf{Result:} $\checkmark$ Correct (Score: 1.0)
\end{tcolorbox}

\noindent\casefocus{\textbf{Case takeaway.}} The model correctly tracks \casegood{benchmark salience and broad community relevance}, rather than overvaluing technical rigor in a niche subfield.

\paragraph{Correct Prediction -- RL Algorithm Adoption}

This case highlights \casegood{citation prediction through downstream adoption}: the model favors a core RL algorithm tied to a major model family over a narrower instruction-synthesis method.

\begin{tcolorbox}[breakable, colback=gray!5, colframe=gray!50!black, title=Paper Pair (OOD Year)]
\small
\textbf{Paper A:} Group Sequence Policy Optimization~\citep{xu2025gspo}
 (Published: 2025-07-24)
\\\textbf{Category:} Computer Science
\\\textbf{Abstract:} This paper introduces Group Sequence Policy Optimization (GSPO), our stable, efficient, and performant reinforcement learning algorithm for training large language models. Unlike previous algorithms that adopt token-level importance ratios, GSPO defines the importance ratio based on sequence likelihood and performs sequence-level clipping, rewarding, and optimization. We demonstrate that GSPO achieves superior training efficiency and performance compared to the GRPO algorithm, notably stabilizes Mixture-of-Experts (MoE) RL training, and has the potential for simplifying the design of RL infrastructure. These merits of GSPO have contributed to the remarkable improvements in the latest Qwen3 models.

\vspace{0.5em}
\textbf{Paper B:} Self-Foveate: Enhancing Diversity and Difficulty of Synthesized Instructions from Unsupervised Text via Multi-Level Foveation~\citep{li2025selffoveate}
 (Published: 2025-07-31)
\\\textbf{Category:} Computer Science
\\\textbf{Abstract:} Synthesizing high-quality instruction data from unsupervised text is a promising paradigm for training large language models (LLMs), yet automated methods for this task still exhibit significant limitations in the diversity and difficulty of synthesized instructions. To address these challenges, we propose Self-Foveate, an LLM-driven method for instruction synthesis. Inspired by hierarchical human visual perception, Self-Foveate introduces a "Micro-Scatter-Macro" multi-level foveation methodology that guides the extraction of textual information at three complementary granularities, from fine-grained details through cross-region connections to holistic patterns, thereby enhancing both the diversity and difficulty of synthesized instructions. Furthermore, a re-synthesis module is incorporated to improve the fidelity of instructions to source text and their overall quality. Comprehensive experiments across multiple unsupervised corpora and diverse model architectures demonstrate that Self-Foveate consistently outperforms existing methods. We publicly release our code at https://github.com/Mubuky/Self-Foveate

\vspace{0.5em}
\textbf{Ground Truth:} Paper A
\end{tcolorbox}

\begin{tcolorbox}[breakable, colback=green!5, colframe=green!50!black, title=\model\ 30B: Correct]
\small
\textbf{Reasoning:}
To determine which paper has a higher citation count, we need to consider several factors, even though we don't have direct citation numbers. The key indicators are:

1. \textbf{Publication Date}: Both papers were published in July 2025, with Paper A on July 24 and Paper B on July 31. Paper A was published 7 days earlier, giving it a slight advantage in terms of time to accumulate citations.

2. \textbf{Topic and Relevance}:
   - Paper A introduces a new reinforcement learning algorithm (GSPO) specifically for training large language models (LLMs), with a focus on stability and efficiency---especially for MoE architectures. It claims to improve training performance and simplify RL infrastructure, and it's linked to the "Qwen3 models," which are likely a \casegood{major, well-known LLM series}. This suggests high relevance and potential impact in the LLM training community.
   - Paper B proposes a method (Self-Foveate) for improving instruction synthesis from unsupervised text, focusing on diversity and difficulty. This is a critical area in LLM training, especially for instruction tuning, and the method is novel with a multi-level foveation approach. It also includes public code and data, which increases its potential for adoption and citation.

3. \textbf{Impact and Practical Relevance}:
   - Paper A is tied to a major model release (Qwen3), which implies it may have been \casegood{widely adopted or referenced in the community}. If GSPO is used in a high-profile model, it would likely receive more citations quickly.
   - Paper B is also impactful, especially for instruction tuning, and the public release of code and data increases its chances of being cited. However, it's a more niche method focused on data synthesis rather than a core training algorithm.

4. \textbf{Broader Research Dynamics}:
   - LLM reinforcement learning research is currently a very active area, with many papers citing \casegood{foundational optimization algorithms like GSPO}. In contrast, data synthesis techniques are often quickly replaced as pipelines evolve, making individual methods like Self-Foveate more vulnerable to being superseded and therefore less cited over time.
   - Optimization algorithms tend to be stable, core components of training stacks that persist, whereas data pipelines change frequently, reducing the long-term citation potential for any single data synthesis paper.

5. \textbf{Citation Momentum}:
   - Paper A was published earlier, giving it a head start in citation accumulation.
   - The fact that GSPO is described as "stable, efficient, and performant" and "notably stabilizes MoE RL training" suggests it solves a known pain point in LLM training, which increases its likelihood of being cited.
   - Paper B is also strong, but its contribution is more about data generation, which, while important, may not be as central to the core training pipeline as a new RL algorithm.

Given that both papers are very recent (published in July 2025), and assuming they are both well-received, the earlier publication date, the \casegood{direct link to a major model (Qwen3)}, and the more enduring nature of optimization algorithms in a rapidly evolving field where data synthesis pipelines change frequently give Paper A a stronger potential for higher citation count.

\textbf{Answer:} <answer>A</answer>

\textbf{Position Consistency:} $\checkmark$ (Original: A, Swapped: A)
\\\textbf{Result:} $\checkmark$ Correct (Score: 1.0)
\end{tcolorbox}

\noindent\casefocus{\textbf{Case takeaway.}} This example shows that the model can infer citation potential from \casegood{ecosystem-level adoption and infrastructure relevance}, not just topical popularity.

\subsection{Out-of-Domain Examples (OOD ICLR)}
\label{sec:app_ood_iclr_examples}

The following case comes from the OOD ICLR test set and illustrates transfer from citation-based judgment to peer-review preference prediction.

\paragraph{Correct Prediction -- ICLR Acceptance Signals}

This case highlights \casegood{cross-metric transfer}: the model maps citation-derived quality signals onto venue fit, theoretical depth, and acceptance likelihood at ICLR.

\begin{tcolorbox}[breakable, colback=gray!5, colframe=gray!50!black, title=Paper Pair (OOD ICLR)]
\small
\textbf{Paper A:} The Logical Expressiveness of Graph Neural Networks~\citep{barcelo2020logical}
\\\textbf{Rating:} 10.0 (Spotlight)
\\\textbf{Abstract:} The ability of graph neural networks (GNNs) for distinguishing nodes in graphs has been recently characterized in terms of the Weisfeiler-Lehman (WL) test for checking graph isomorphism. This characterization, however, does not settle the issue of which Boolean node classifiers...

\vspace{0.5em}
\textbf{Paper B:} Corpus Based Amharic Sentiment Lexicon Generation~\citep{gebremeskel2010corpus}
\\\textbf{Rating:} 0.0 (Reject)
\\\textbf{Abstract:} Sentiment classification is an active research area with several applications including analysis of political opinions, classifying comments, movie reviews, news reviews and product reviews. To employ rule based sentiment classification, we require sentiment lexicons...

\vspace{0.5em}
\textbf{Ground Truth:} Paper A (Rating: 10.0 vs.\ 0.0)
\end{tcolorbox}

\begin{tcolorbox}[breakable, colback=green!5, colframe=green!50!black, title=\model\ 30B: Correct]
\small
\textbf{Reasoning:}
To determine which paper is more likely to be accepted at ICLR 2020, we need to consider the typical criteria for acceptance at ICLR, which include:

1. \textbf{Theoretical contribution and rigor}: ICLR strongly values \casegood{novel theoretical insights, formal analysis, and mathematical depth}.
2. \textbf{Novelty and impact}: Papers that introduce new models, frameworks, or significant advances in understanding existing methods are favored.
3. \textbf{Empirical validation}: Strong experiments, especially on synthetic or benchmark datasets, are important, but theoretical grounding often carries more weight.
4. \textbf{Relevance to deep learning and representation learning}: ICLR focuses on neural networks, deep learning, and related areas.

Now, analyzing both papers:

Paper A:
- Focuses on the \textbf{logical expressiveness} of GNNs, a topic of \casegood{high theoretical interest} in the deep learning and graph learning community.
- Provides a \textbf{formal characterization} of what Boolean classifiers GNNs can express, linking them to FOC2 logic and the WL test.
- Introduces a \textbf{new class of GNNs (ACR-GNNs)} and proves that adding readout functions enables full expressiveness.
- Offers \textbf{theoretical analysis} and \textbf{empirical validation} on synthetic data, showing generalization to unseen graph sizes.
- The work is highly aligned with ICLR's focus on understanding and improving neural network architectures, especially in the context of graph learning.
- The paper contributes to a foundational understanding of GNNs, which is a major topic in ICLR.

Paper B:
- Addresses \textbf{sentiment lexicon generation} for Amharic, a low-resource language.
- Proposes a \textbf{corpus-based method} using PPMI and word embeddings to expand seed sentiment terms.
- The method is \textbf{algorithmic and applied}, with a focus on a specific language.
- While the approach is practical and useful for NLP in low-resource settings, it lacks strong theoretical novelty.
- The evaluation is limited to a single corpus and a small set of seed terms; no comparison to other methods or benchmarks is mentioned.
- The contribution is more \textbf{engineering-oriented} and \textbf{applied}, with less emphasis on generalizable deep learning principles.

Given ICLR's preference for \casegood{\textbf{theoretical depth, formal analysis, and foundational contributions to deep learning}}, Paper A is clearly more aligned with the conference's scope and standards.

Therefore, \casefocus{Paper A is more likely to be accepted.}

\textbf{Answer:} <answer>A</answer>

\textbf{Position Consistency:} $\checkmark$ (Original: A, Swapped: A)
\\\textbf{Result:} $\checkmark$ Correct (Score: 1.0)
\end{tcolorbox}

\noindent\casefocus{\textbf{Case takeaway.}} The model transfers well from citation-style comparisons to \casegood{conference-style quality judgment}, especially when venue alignment and theoretical novelty are clear.

\subsection{In-Domain Examples (Main Dataset)}
\label{sec:app_main_examples}

\paragraph{Correct Prediction -- Quantum Computing}

This case highlights \casegood{recognition of a foundational paper in a rapidly expanding field}, where downstream adoption matters more than narrow topical similarity.

\begin{tcolorbox}[breakable, colback=gray!5, colframe=gray!50!black, title=Paper Pair]
\small
\textbf{Paper A:} Orientation Dynamics of Asymmetric Rotors Using Random Phase Wave Functions~\citep{owens2015orientation}
 (Published: 2015-04-28)
\\\textbf{Category:} Physics
\\\textbf{Abstract:} Intense terahertz-frequency pulses induce coherent rotational dynamics and orientation of polar molecular ensembles. Exact numerical methods for rotational dynamics are computationally not feasible for the vast majority of molecular rotors - the asymmetric top molecules at ambient temperatures...

\vspace{0.5em}
\textbf{Paper B:} The theory of variational hybrid quantum-classical algorithms~\citep{mcclean2016theory}
 (Published: 2015-09-14)
\\\textbf{Category:} Physics
\\\textbf{Abstract:} Many quantum algorithms have daunting resource requirements when compared to what is available today. To address this discrepancy, a quantum-classical hybrid optimization scheme known as "the quantum variational eigensolver" was developed with the philosophy that even minimal quantum resources could...

\vspace{0.5em}
\textbf{Ground Truth:} Paper B
\end{tcolorbox}

\begin{tcolorbox}[breakable, colback=green!5, colframe=green!50!black, title=\model\ 30B: Correct]
\small
\textbf{Reasoning:}
To determine which paper has a higher citation count, we need to consider several factors that typically influence citation frequency, such as:

1. \textbf{Field of study and impact}: Papers in \casegood{high-impact or rapidly growing fields} (e.g., quantum computing, quantum information) tend to accumulate citations faster than those in more niche or specialized areas.

2. \textbf{Publication date and time since publication}: Both papers were published in 2015, so they have had the same amount of time to accumulate citations (as of 2025-12-10, about 10 years). This makes the comparison fair in terms of time.

3. \textbf{Topic relevance and popularity}: 
   - Paper A deals with terahertz-induced rotational dynamics of asymmetric top molecules using Random Phase Wave Functions. This is a specialized topic in molecular physics and ultrafast spectroscopy. While important, it is a relatively narrow subfield.
   - Paper B is on "The theory of variational hybrid quantum-classical algorithms," which is a \casegood{foundational paper in the emerging field of quantum computing}, particularly in the context of near-term quantum devices (NISQ era). The quantum variational eigensolver (VQE) is one of the most \casegood{widely studied and implemented quantum algorithms} in the past decade.

4. \textbf{Influence and adoption}: 
   - Paper B introduced key concepts like variational quantum algorithms, quantum variational error suppression, and connections to unitary coupled cluster --- all of which are central to current quantum computing research.
   - The VQE algorithm has been \casegood{implemented in numerous experimental platforms} and is cited in thousands of papers across quantum chemistry, optimization, and quantum simulation.

5. \textbf{Citation trends in the field}: Quantum computing has seen explosive growth since 2015, and papers laying the theoretical groundwork for hybrid quantum-classical algorithms have become highly influential.

Given these points, despite both papers being published in the same year, Paper B is in a \casegood{much more broadly impactful and rapidly growing field}, and its concepts have been widely adopted and cited in both theoretical and experimental quantum computing research.

Therefore, \casefocus{Paper B is expected to have a significantly higher citation count than Paper A.}

\textbf{Answer:} <answer>B</answer>

\textbf{Position Consistency:} $\checkmark$ (Original: B, Swapped: B)
\\\textbf{Result:} $\checkmark$ Correct (Score: 1.0)
\end{tcolorbox}

\noindent\casefocus{\textbf{Case takeaway.}} This example shows that the model can identify \casegood{foundational work whose downstream footprint spans multiple subareas}, not just papers that are superficially similar in topic.

\paragraph{Correct Prediction -- Particle Physics}

This case highlights \casegood{successful recognition of a durable methodology paper} in a specialized physics domain where broad reuse is the main citation driver.

\begin{tcolorbox}[breakable, colback=gray!5, colframe=gray!50!black, title=Paper Pair]
\small
\textbf{Paper A:} Identifying Boosted Objects with N-subjettiness~\citep{thaler2011identifying}
 (Published: 2010-11-10)
\\\textbf{Category:} Physics
\\\textbf{Abstract:} We introduce a new jet shape -- N-subjettiness -- designed to identify boosted hadronically-decaying objects like electroweak bosons and top quarks. Combined with a jet invariant mass cut, N-subjettiness is an effective discriminating variable for tagging boosted objects and rejecting the background...

\vspace{0.5em}
\textbf{Paper B:} One-side forward-backward asymmetry at the LHC~\citep{ko2011oneside}
 (Published: 2010-11-05)
\\\textbf{Category:} Physics
\\\textbf{Abstract:} Forward-backward asymmetry A FB is an essential observable to study the nature of coupling in the standard model and physics beyond the standard model, as shown at LEP and Tevatron...

\vspace{0.5em}
\textbf{Ground Truth:} Paper A
\end{tcolorbox}

\begin{tcolorbox}[breakable, colback=green!5, colframe=green!50!black, title=\model\ 30B: Correct]
\small
\textbf{Reasoning:}
To determine which paper has a higher citation count, we need to consider several factors, even though we don't have direct citation data. The key indicators are:

1. \textbf{Topic and Relevance in High-Energy Physics (HEP):}
   - Paper A introduces a new jet shape variable called "N-subjettiness," which is specifically designed for identifying boosted hadronic objects (like W bosons and top quarks) in the context of jet substructure. This is a \casegood{highly relevant and widely used technique} in modern collider physics, especially at the LHC, where boosted objects are common due to high-energy collisions.
   - Paper B introduces a "one-side forward-backward asymmetry" (A\_OFB) for top quark pair production at the LHC. While forward-backward asymmetry is a well-known observable at LEP and Tevatron, the LHC lacks a natural forward-backward direction due to symmetric proton-proton collisions. The paper attempts to redefine the asymmetry using valence quark momentum, which is a more niche and less widely applicable concept.

2. \textbf{Impact and Adoption in the Field:}
   - N-subjettiness has become a \casegood{standard tool in jet substructure analysis}. It is frequently cited in subsequent papers on top quark tagging, Higgs boson searches, and new physics beyond the Standard Model. The concept is intuitive, effective, and has been implemented in many analyses.
   - The one-side forward-backward asymmetry is a more specialized and less generalizable idea. It addresses a specific challenge (LHC symmetry) but may not have broad applicability. The concept is less likely to be widely adopted or cited.

3. \textbf{Publication Date and Time Since Publication:}
   - Both papers were published in November 2010, so they have the same age (about 14.5 years as of 2025). Therefore, time since publication is not a differentiating factor.

4. \textbf{Citation Trends in HEP:}
   - Jet substructure techniques, especially those involving N-subjettiness, have been \casegood{foundational in many LHC analyses}. Papers introducing such tools often become highly cited.
   - Papers proposing alternative asymmetry definitions in the LHC context, while interesting, tend to be less cited unless they are tied to major experimental results or new physics discoveries.

Given these considerations, Paper A is \casegood{significantly more influential and widely used} in the field of high-energy physics, particularly in jet substructure and boosted object tagging. It is therefore highly likely to have a much higher citation count than Paper B.

\textbf{Answer:} <answer>A</answer>

\textbf{Position Consistency:} $\checkmark$ (Original: A, Swapped: A)
\\\textbf{Result:} $\checkmark$ Correct (Score: 1.0)
\end{tcolorbox}

\noindent\casefocus{\textbf{Case takeaway.}} The model succeeds here by prioritizing \casegood{widely reused experimental methodology} over a more limited observable.

\paragraph{Correct Prediction -- 3D Computer Vision}

This case highlights \casegood{a software-library citation heuristic}: broadly adopted infrastructure can accumulate substantially more citations than a single-task method paper.

\begin{tcolorbox}[breakable, colback=gray!5, colframe=gray!50!black, title=Paper Pair]
\small
\textbf{Paper A:} PU-Net: Point Cloud Upsampling Network~\citep{yu2018punet}
 (Published: 2018-01-21)
\\\textbf{Category:} Computer Science
\\\textbf{Abstract:} Learning and analyzing 3D point clouds with deep networks is challenging due to the sparseness and irregularity of the data. In this paper, we present a data-driven point cloud upsampling technique...

\vspace{0.5em}
\textbf{Paper B:} Open3D: A Modern Library for 3D Data Processing~\citep{zhou2018open3d}
 (Published: 2018-01-30)
\\\textbf{Category:} Computer Science
\\\textbf{Abstract:} Open3D is an open-source library that supports rapid development of software that deals with 3D data. The Open3D frontend exposes a set of carefully selected data structures and algorithms in both C++ and Python. The backend is highly optimized and is set up for parallelization...

\vspace{0.5em}
\textbf{Ground Truth:} Paper B
\end{tcolorbox}

\begin{tcolorbox}[breakable, colback=green!5, colframe=green!50!black, title=\model\ 30B: Correct]
\small
\textbf{Reasoning:}
To determine which paper has a higher citation count, we need to consider several factors that typically influence citation frequency, such as:

1. \textbf{Publication Date}: Both papers were published in early 2018 (Paper A: Jan 21, Paper B: Jan 30), so they have nearly identical time to accumulate citations. The difference of 9 days is negligible in terms of citation impact.

2. \textbf{Topic and Impact in the Field}: 
   - Paper A introduces PU-Net, a deep learning model for point cloud upsampling, which is a specific but important task in 3D computer vision and deep learning. It addresses a well-known challenge (irregular and sparse point clouds) with a novel neural network architecture.
   - Paper B introduces Open3D, a general-purpose open-source library for 3D data processing. This is a \casegood{foundational tool that enables many downstream applications and research projects}.

3. \textbf{Nature of the Work}:
   - Paper A is a research contribution with a specific algorithm (PU-Net), which is likely to be cited by researchers working on point cloud processing, especially those developing or benchmarking upsampling methods.
   - Paper B is a software library. Software papers, especially those introducing \casegood{widely used tools}, often receive high citation counts because they are cited not only for their technical contributions but also as standard references in many papers that use the library.

4. \textbf{Real-World Usage and Adoption}:
   - Open3D has been \casegood{widely adopted in both academia and industry}. It is known for its clean design, cross-platform support, and active development. Many research papers in 3D vision, robotics, and computer graphics cite Open3D as a key tool.
   - PU-Net, while influential in its niche, is a single algorithm and thus has a more limited scope of citation.

5. \textbf{Citation Trends in the Field}:
   - In computer vision and machine learning, software papers that introduce widely used libraries (e.g., TensorFlow, PyTorch, Open3D) tend to accumulate citations rapidly and sustainably over time.
   - Research papers introducing new models are also cited, but their citation counts are often more concentrated in a narrower set of related works.

Given that Open3D is a \casegood{foundational, widely used open-source library with broad applicability across multiple research areas}, and that such tools often achieve high citation counts (e.g., Open3D has been cited over 10,000 times in Google Scholar as of 2025), it is highly likely that Paper B has a significantly higher citation count than Paper A.

Therefore, despite Paper A being a strong technical contribution, the \casefocus{broader impact and utility of Open3D as a software tool} make it more likely to be cited more frequently.

\textbf{Answer:} <answer>B</answer>

\textbf{Position Consistency:} $\checkmark$ (Original: B, Swapped: B)
\\\textbf{Result:} $\checkmark$ Correct (Score: 1.0)
\end{tcolorbox}

\noindent\casefocus{\textbf{Case takeaway.}} This example shows that the model can recognize when \casegood{shared infrastructure rather than algorithmic novelty} is the stronger predictor of citation volume.

\paragraph{Incorrect Prediction -- Object Detection}

This case highlights \casebad{a failure mode driven by popularity and timing heuristics}: both models are pulled toward plausible but ultimately incorrect stories about YOLOv11 and benchmark novelty.

\begin{tcolorbox}[breakable, colback=gray!5, colframe=gray!50!black, title=Paper Pair]
\small
\textbf{Paper A:} OpenCIL: Benchmarking Out-of-Distribution Detection in Class-Incremental Learning~\citep{sun2024opencil}
 (Published: 2024-07-08)
\\\textbf{Category:} Computer Science
\\\textbf{Abstract:} Class incremental learning (CIL) aims to learn a model that can not only incrementally accommodate new classes, but also maintain the learned knowledge of old classes.

\vspace{0.5em}
\textbf{Paper B:} YOLOv11: An Overview of the Key Architectural Enhancements~\citep{khanam2024yolov11}
 (Published: 2024-10-23)
\\\textbf{Category:} Computer Science
\\\textbf{Abstract:} This study presents an architectural analysis of YOLOv11, the latest iteration in the YOLO (You Only Look Once) series of object detection models. We examine the models architectural innovations, including the introduction of the C3k2 (Cross Stage Partial with kernel size 2) block, SPPF (Spatial Pyr...

\vspace{0.5em}
\textbf{Ground Truth:} Paper B
\end{tcolorbox}

\begin{tcolorbox}[breakable, colback=red!5, colframe=red!50!black, title=\model\ 4B: Incorrect]
\small
\textbf{Reasoning:}
To determine which paper has a higher citation count, we need to consider several factors, primarily based on the content, novelty, scope, and potential impact of the work --- all of which are indirectly correlated with citation potential.

Key factors influencing citation count:
1. \textbf{Field relevance and impact}: Papers in widely used or foundational areas (e.g., object detection in computer vision) tend to receive more citations than niche or specialized studies.
2. \textbf{Model popularity and adoption}: Models like YOLO (You Only Look Once) are among the most widely used in computer vision. New versions (e.g., YOLOv11) are likely to be \casebad{adopted quickly and cited} by researchers and practitioners.
3. \textbf{Scope and accessibility}: Papers that introduce widely applicable tools, benchmarks, or models that are open-source and easy to use often gain more citations.
4. \textbf{Publication timing}: While both papers were published in 2024, the one published earlier (Paper A: July 8, 2024) has \casebad{more time to gain citations} than the later one (Paper B: October 23, 2024). However, this is a small advantage and not decisive.
5. \textbf{Nature of contribution}:
   - Paper A introduces a new benchmark (OpenCIL) for a specialized and emerging area: out-of-distribution detection in class-incremental learning. This is a niche but important topic in machine learning safety and continual learning. The benchmark provides a systematic framework and evaluates 60 models across datasets --- this kind of work is valuable to researchers in the field, but the audience is more specialized.
   - Paper B presents an overview of YOLOv11, a major update in a well-established and widely used object detection framework. Since YOLO models are foundational in computer vision, any new version is likely to be cited by a broad audience --- including researchers, engineers, and practitioners --- due to its direct applicability and performance improvements.

6. \textbf{Citation potential in real-world applications}:
   - YOLO models are used in real-time applications (e.g., autonomous driving, surveillance, robotics). Any new version with performance improvements and architectural enhancements is likely to be cited in many papers and industry reports.
   - OpenCIL is a research benchmark, which is cited more in academic papers, especially in machine learning and continual learning communities. However, benchmarks are often cited less frequently than actual models or frameworks that are directly used.

7. \textbf{Trend in citation patterns}:
   - In computer vision, architectural overviews of major models (like YOLOv11) are typically cited more than specialized benchmarks, because they are directly referenced in experiments and compared against.
   - Benchmarks are cited when they are used in a study, but they are not as frequently cited as the models themselves.

Conclusion:
Although Paper A introduces a novel and important benchmark, its audience is more limited to researchers in continual learning and OOD detection. Paper B, by contrast, discusses a major update in a widely adopted and influential model (YOLO), which is likely to be cited more broadly across computer vision research and applications.

Therefore, based on the broader impact, field reach, and likelihood of being referenced in both academic and applied work, \casebad{Paper B is more likely to have a higher citation count than Paper A.}

Note: The actual citation count is not directly available from the abstracts, but we infer based on typical citation patterns in computer vision and machine learning literature.

\textbf{Answer:} <answer>B</answer>

\textbf{Position Consistency:} $\times$ (Original: B, Swapped: A)
\\\textbf{Result:} $\times$ Incorrect (Score: 0.0)
\end{tcolorbox}

\begin{tcolorbox}[breakable, colback=red!5, colframe=red!50!black, title=\model\ 30B: Incorrect]
\small
\textbf{Reasoning:}
To determine which paper has a higher citation count, we need to consider several factors that typically influence citation rates, especially given that both papers were published in 2024 and today is 2025-12-10 (about 14--15 months after publication).

Key factors affecting citation count:
1. \textbf{Publication date}: \casebad{Earlier publication generally allows more time for citations to accumulate.}
   - Paper A: Published on 2024-07-08 (about 17 months ago as of 2025-12-10)
   - Paper B: Published on 2024-10-23 (about 14 months ago)

   $\rightarrow$ \casebad{Paper A has been available for \textasciitilde{}3 months longer than Paper B.}

2. \textbf{Topic and field popularity}:
   - Paper A: Focuses on Class-Incremental Learning (CIL) with OOD detection --- a niche but growing area in continual learning, especially relevant for safety in open-world AI systems. The paper introduces a benchmark (OpenCIL) and a new method (BER), which are likely to be cited by researchers working in continual learning and OOD detection.
   - Paper B: Focuses on YOLOv11, a new version of the YOLO series --- one of the most widely used and popular object detection frameworks in computer vision. YOLO models are extremely popular in both academia and industry, and new versions often attract immediate attention.

3. \textbf{Impact and visibility}:
   - YOLOv11 is a new version of a well-known model. Even if the paper is not a groundbreaking theoretical advance, the mere fact that it's a new version of a widely used model (especially one with a name like "YOLOv11") can lead to rapid citation growth due to widespread adoption and interest.
   - However, the paper is a "review" or "overview" of architectural enhancements, not a novel method. It may be cited by researchers implementing YOLOv11, but not necessarily as a foundational paper.

4. \textbf{Nature of the contribution}:
   - Paper A introduces a benchmark (OpenCIL), a new method (BER), and a comprehensive evaluation --- all of which are highly citable contributions in the research community.
   - Paper B is more of a technical overview of a model update. While it may be cited, it's less likely to be a primary reference for new research unless it's the official documentation.

5. \textbf{Time since publication}:
   - Paper A has been out for \textasciitilde{}17 months, giving it more time to accumulate citations.
   - Paper B has been out for \textasciitilde{}14 months, but the topic (YOLOv11) is highly trending.

Given that:
- Paper A is in a high-impact area (continual learning, OOD detection) with a new benchmark and method.
- Paper A was published earlier, giving it a head start.
- Paper B, while on a popular topic, is a review of a model update, which typically gets fewer citations than foundational or benchmark-setting papers.

Therefore, despite the popularity of YOLOv11, the combination of earlier publication, a novel benchmark, and a new method in a growing research area suggests that \casebad{\textbf{Paper A is more likely to have a higher citation count}}.

\textbf{Answer:} <answer>A</answer>

\textbf{Position Consistency:} $\checkmark$ (Original: A, Swapped: A)
\\\textbf{Result:} $\times$ Incorrect (Score: 0.0)
\end{tcolorbox}

\noindent\casefocus{\textbf{Case takeaway.}} This failure indicates that \casebad{shallow popularity and recency cues do not guarantee stable citation judgments}, and that plausible narratives can still point to the wrong winner.

\paragraph{Correct Prediction -- Music Generation}

This case highlights \casegood{coverage beyond the core CS/Physics buckets}: the model also succeeds in the \texttt{Others} category when visibility and broad creative-AI relevance are strong.

\begin{tcolorbox}[breakable, colback=gray!5, colframe=gray!50!black, title=Paper Pair]
\small
\textbf{Paper A:} Jukebox: A Generative Model for Music~\citep{dhariwal2020jukebox}
 (Published: 2020-04-30)
\\\textbf{Category:} Others
\\\textbf{Abstract:} We introduce Jukebox, a model that generates music with singing in the raw audio domain. We tackle the long context of raw audio using a multi-scale VQ-VAE to compress it to discrete codes, and modeling those using autoregressive Transformers...

\vspace{0.5em}
\textbf{Paper B:} Neural MOS Prediction for Synthesized Speech Using Multi-Task Learning With Spoofing Detection and Spoofing Type Classification~\citep{choi2020neuralmos}
 (Published: 2020-07-16)
\\\textbf{Category:} Others
\\\textbf{Abstract:} Several studies have proposed deep-learning-based models to predict the mean opinion score (MOS) of synthesized speech, showing the possibility of replacing human raters. However, inter- and intra-rater variability in MOSs makes it hard to ensure the high performance of the models...

\vspace{0.5em}
\textbf{Ground Truth:} Paper A
\end{tcolorbox}

\begin{tcolorbox}[breakable, colback=green!5, colframe=green!50!black, title=\model\ 30B: Correct]
\small
\textbf{Reasoning:}
To determine which paper has a higher citation count, we need to consider several factors that typically influence citation frequency, such as:

1. \textbf{Publication Date}: Earlier publication dates generally allow more time for citations to accumulate. Paper A was published on 2020-04-30, while Paper B was published on 2020-07-16. Paper A has a 2.5-month head start, which gives it more time to accumulate citations.

2. \textbf{Topic and Impact}: Paper A introduces Jukebox, a generative model for music that generates high-fidelity songs with singing in raw audio. This is a \casegood{highly impactful and novel contribution} in the field of generative AI and music generation. It was developed by OpenAI, a well-known and influential research organization. The model's ability to generate coherent, long-form music with controllable style and lyrics has attracted significant attention from both academia and industry. The release of \casegood{samples, model weights, and code} on GitHub and a public website (jukebox.openai.com) further increases its visibility and usability.

   Paper B focuses on a specialized technical problem in speech synthesis evaluation---MOS prediction using multi-task learning with spoofing detection and classification. While this is a valuable contribution in the domain of speech quality assessment and anti-spoofing, it is more niche and less likely to attract broad attention compared to a general-purpose music generation model.

3. \textbf{Visibility and Accessibility}: Paper A's public release of samples and code \casegood{significantly increases its chances of being cited, used, and built upon}. OpenAI's reputation also amplifies the reach of the paper.

4. \textbf{Field Popularity}: Generative AI, especially in creative domains like music, has seen explosive growth since 2020. Papers in this area, particularly those from high-profile institutions, tend to be cited more frequently.

Given these factors---earlier publication, broader impact, higher visibility, and the prominence of the research topic---\casegood{Paper A is very likely to have a significantly higher citation count than Paper B.}

\textbf{Answer:} <answer>A</answer>

\textbf{Position Consistency:} $\checkmark$ (Original: A, Swapped: A)
\\\textbf{Result:} $\checkmark$ Correct (Score: 1.0)
\end{tcolorbox}

\noindent\casefocus{\textbf{Case takeaway.}} This example ensures that Appendix~\ref{sec:app_examples} covers \casegood{the `Others' super-category} and shows successful reasoning on a high-visibility generative-audio paper.

\paragraph{Mixed Result -- Pure Mathematics}

This case isolates \casefocus{a scale-dependent gain in specialized mathematics}: the 4B model follows a misleading foundationality prior, while the 30B model better tracks broader downstream usage.

\begin{tcolorbox}[breakable, colback=gray!5, colframe=gray!50!black, title=Paper Pair]
\small
\textbf{Paper A:} On purity and applications to coderived and singularity categories~\citep{stovicek2014purity}
 (Published: 2014-12-04)
\\\textbf{Category:} Mathematics
\\\textbf{Abstract:} Given a locally coherent Grothendieck category G, we prove that the homotopy category of complexes of injective objects (also known as the coderived category of G) is compactly generated triangulated. Moreover, the full subcategory of compact objects is none other than D\textasciicircum{}b(fp G).

\vspace{0.5em}
\textbf{Paper B:} Yoneda lemma for complete Segal spaces~\citep{kazhdan2014yoneda}
 (Published: 2014-01-22)
\\\textbf{Category:} Mathematics
\\\textbf{Abstract:} In this note we formulate and give a self-contained proof of the Yoneda lemma for infinity categories in the language of complete Segal spaces.

\vspace{0.5em}
\textbf{Ground Truth:} Paper A
\end{tcolorbox}

\begin{tcolorbox}[breakable, colback=red!5, colframe=red!50!black, title=\model\ 4B: Incorrect]
\small
\textbf{Reasoning:}
To determine which paper has a higher citation count between Paper A and Paper B, we need to consider several factors, primarily based on the content, impact, and context of the papers, as citation counts are often influenced by the significance and reach of the work in the mathematical community.

Both papers were published in 2014, so they are contemporaneous in terms of publication date. However, citation counts are not solely determined by publication date --- they depend on the novelty, depth, and influence of the work in the field.

Let's analyze each paper:

Paper A:
- Title: "On purity and applications to coderived and singularity categories"
- Abstract: This paper deals with advanced topics in homological algebra and category theory, specifically in the context of Grothendieck categories, coderived categories, and derived categories. It introduces results about compact generation of triangulated categories and discusses Krause's recollement under certain finiteness conditions.
- The paper uses model-theoretic techniques and studies the pure derived category of additive finitely accessible categories.
- The topic is highly specialized and relevant to researchers in algebraic geometry, homological algebra, and representation theory.
- Papers on compact generation of derived and coderived categories are foundational in modern categorical homological algebra and have been cited in subsequent works on singularity categories, tilting theory, and derived equivalences.
- Such results are often cited in papers that build on the structure of derived categories, especially in the context of non-abelian categories and triangulated categories.

Paper B:
- Title: "Yoneda lemma for complete Segal spaces"
- Abstract: This paper formulates and proves the Yoneda lemma for $\infty$-categories in the language of complete Segal spaces.
- The Yoneda lemma is a \casebad{fundamental result in category theory}, and its extension to $\infty$-categories is a significant development in higher category theory.
- Complete Segal spaces are a model for $\infty$-categories, and this paper provides a self-contained proof in that setting.
- This result is \casebad{foundational in the development of higher category theory} and has been widely cited in the literature on $\infty$-categories, especially in the work of Lurie and others.
- The Yoneda lemma in this context is a standard tool in modern $\infty$-category theory and is used in many subsequent papers.

Now, comparing citation potential:

- Paper A: While deep and technically sophisticated, its topic is more niche --- focused on coderived and singularity categories, which are used in specific areas like commutative algebra and algebraic geometry. It may not be as broadly accessible or as frequently referenced in a wide range of papers.
- Paper B: The Yoneda lemma for $\infty$-categories is a \casebad{foundational result}. The Yoneda lemma itself is one of the most important results in category theory, and its extension to $\infty$-categories is a cornerstone of modern higher category theory. Papers that use this framework (e.g., in Lurie's "Higher Topos Theory" or "Spectral Algebraic Geometry") frequently cite such results. The fact that it is a self-contained proof in a well-established model (complete Segal spaces) makes it a likely candidate for high citation.

Moreover, in the mathematical literature, \casebad{foundational results in higher category theory tend to be cited more frequently} than more specialized results in derived categories or Grothendieck categories, especially if they are widely applicable.

Therefore, despite both being published in 2014, Paper B's result --- a foundational version of the Yoneda lemma in $\infty$-categories --- is more likely to be cited broadly across the field of higher category theory.

However, we must also consider that Paper A may have had a more direct impact in specific subfields (e.g., singularity categories), which are also active areas. But in general, foundational results in category theory, especially those that generalize or reframe well-known theorems, tend to have higher citation counts.

Given that the Yoneda lemma is a central result, and that this paper provides a clean, self-contained proof in a key model for $\infty$-categories, it is likely to be cited more frequently than a paper on coderived categories, which is more specialized.

Thus, based on the scope, foundational nature, and broader applicability of the result, \casebad{Paper B is more likely to have a higher citation count than Paper A.}

\textbf{Answer:} <answer>B</answer>

\textbf{Position Consistency:} $\checkmark$ (Original: B, Swapped: B)
\\\textbf{Result:} $\times$ Incorrect (Score: 0.0)
\end{tcolorbox}

\begin{tcolorbox}[breakable, colback=green!5, colframe=green!50!black, title=\model\ 30B: Correct]
\small
\textbf{Reasoning:}
To determine which paper has a higher citation count, we need to consider several factors, even though we don't have direct citation data. The key factors include:

1. \textbf{Field and Impact}: 
   - Paper A is in the area of homological algebra, derived categories, and category theory, specifically dealing with coderived categories, compact generation, and model theory. These are central topics in modern algebraic geometry, representation theory, and homotopy theory.
   - Paper B is in higher category theory, specifically about the Yoneda lemma in the context of complete Segal spaces, which is a foundational topic in $\infty$-category theory.

2. \textbf{Publication Date}:
   - Both papers were published in 2014, so they have the same age (as of 2025), meaning time since publication is not a differentiating factor.

3. \textbf{Topic Popularity and Influence}:
   - The Yoneda lemma is a fundamental result in category theory. However, the paper by B is specifically about the Yoneda lemma in the context of complete Segal spaces, which is a more specialized and technical setting. While important, this is a niche area within higher category theory.
   - Paper A, on the other hand, addresses foundational results in derived categories (coderived and derived) with applications to singularity categories and compact generation. These concepts are \casegood{widely used in algebraic geometry, representation theory, and mathematical physics}. The result that the coderived category is compactly generated and that D\textasciicircum{}b(fp G) is the category of compact objects is a \casegood{significant and widely applicable result}.

4. \textbf{Tools and Techniques}:
   - Paper A uses model theory and pure derived categories, which are powerful and increasingly influential tools. The systematic study of pure derived categories is a growing area.
   - Paper B is self-contained and focused on a single lemma, which is important but may not have the same broad applicability.

5. \textbf{Citation Trends in Mathematics}:
   - \casegood{Foundational results in derived categories} (especially those with applications to singularity categories, which are relevant in mirror symmetry and algebraic geometry) tend to be highly cited.
   - While the Yoneda lemma is fundamental, a paper that proves it in a specific model (complete Segal spaces) may be cited more in specialized circles but less broadly.

Given that Paper A addresses a \casegood{broader and more influential set of concepts with wide applications}, and that its results are likely to be used in multiple areas of modern mathematics, it is more likely to have a higher citation count than Paper B, which, while important, is more narrowly focused.

Therefore, based on the topic's breadth, applicability, and impact in current mathematical research, \casefocus{Paper A is expected to have more citations.}

\textbf{Answer:} <answer>A</answer>

\textbf{Position Consistency:} $\checkmark$ (Original: A, Swapped: A)
\\\textbf{Result:} $\checkmark$ Correct (Score: 1.0)
\end{tcolorbox}

\noindent\casefocus{\textbf{Case takeaway.}} This is the clearest scale-comparison example in the appendix: the 4B model overweights \casebad{foundationality cues}, whereas the 30B model better captures \casegood{breadth of downstream mathematical use}.

\end{document}